\DeclareMathAlphabet{\mathpzc}{OT1}{pzc}{m}{it}
\newcommand{\subfiguretitle}[1]{{\scriptsize{#1}} \\}
\newcommand{\R}{\mathbb{R}}                                      
\newcommand{\innerprod}[2]{\left\langle #1,\, #2 \right\rangle}  
\newcommand{\ts}{\hspace*{0.1em}}                                
\newcommand{\mc}[2][]{\mathpzc{#2}{\smash[t]{\mathstrut}}_{#1}}  
\providecommand{\norm}[1]{\left\lVert #1 \right\rVert}           
\newcolumntype{C}[1]{>{\centering\let\newline\\\arraybackslash\hspace{0pt}}m{#1}}
\newcommand\xqed[1]{\leavevmode\unskip\penalty9999 \hbox{}\nobreak\hfill \quad\hbox{#1}}
\newcommand{\exampleSymbol}{\xqed{$\triangle$}}
\DeclareMathOperator{\diag}{diag}
\newtheorem{theorem}{Theorem}[section]
\newtheorem{corollary}[theorem]{Corollary}
\newtheorem{lemma}[theorem]{Lemma}
\newtheorem{proposition}[theorem]{Proposition}
\newtheorem{definition}[theorem]{Definition}
\theoremstyle{definition}
\newtheorem{example}[theorem]{Example}
\newtheorem{remark}[theorem]{Remark}
\newtheorem{textalgorithm}[theorem]{Algorithm}
\renewcommand*\env@matrix[1][*\c@MaxMatrixCols c]{%
  \hskip -\arraycolsep
  \let\@ifnextchar\new@ifnextchar
  \array{#1}}
\def\blfootnote{\gdef\@thefnmark{}\@footnotetext}
\begin{document}

\title{Transfer operators on graphs: \\ Spectral clustering and beyond}
\author[1]{Stefan Klus}
\author[2]{Maia Trower}
\affil[1]{School of Mathematical \& Computer Sciences, Heriot--Watt University, UK}
\affil[2]{Maxwell Institute for Mathematical Sciences, University of Edinburgh and Heriot--Watt University, UK}

\date{}

\maketitle

\begin{abstract}
Graphs and networks play an important role in modeling and analyzing complex interconnected systems such as transportation networks, integrated circuits, power grids, citation graphs, and biological and artificial neural networks. Graph clustering algorithms can be used to detect groups of strongly connected vertices and to derive coarse-grained models. We define transfer operators such as the Koopman operator and the Perron--Frobenius operator on graphs, study their spectral properties, introduce Galerkin projections of these operators, and illustrate how reduced representations can be estimated from data. In particular, we show that spectral clustering of undirected graphs can be interpreted in terms of eigenfunctions of the Koopman operator and propose novel clustering algorithms for directed graphs based on generalized transfer operators. We demonstrate the efficacy of the resulting algorithms on several benchmark problems and provide different interpretations of clusters.
\end{abstract}

\section{Introduction}

Transfer operators are frequently used for the analysis of complex dynamical systems since their eigenvalues and eigenfunctions contain important information about global properties of the underlying processes~\cite{Ko31, LaMa94, DJ99, Mezic05}. While the eigenvalues are related to inherent relaxation timescales, the corresponding eigenfunctions can be used to identify metastable sets representing, for example, stable conformations of proteins or coherent sets describing, for instance, slowly mixing eddies in the ocean~\cite{FrSaMo10, SS13, KKS16, BaKo17, KHMN19, WuNo20, SKH23}. Other applications include model reduction, system identification, change-point detection, forecasting, control, and stability analysis~\cite{MauGon16, MauMez16, KorMez16, Gia19, KNPNCS20}. Although transfer operators have been known for a long time, they recently gained renewed attention due to the availability of large data sets and powerful machine learning methods to estimate these operators and their spectral properties from simulation or measurement data.

Spectral clustering for graphs is a well-known and powerful technique for partitioning networks into groups of nodes that are well-connected internally, and poorly connected to other groups of nodes~\cite{Luxburg07}. It was first shown in~\cite{DonHof73} that graphs can be partitioned based on eigenvectors of the adjacency matrix, but more recently spectral clustering has become increasingly popular in the machine learning community. In particular, the spectral clustering algorithms proposed in~\cite{ShiMal00, NgJorWei02} are commonly deployed and are capable of identifying near-optimal clusters in undirected static graphs~\cite{PenSunZan15}. Many extensions of these standard algorithms exist, including extensions to directed graphs~\cite{SaPa11} and time-evolving graphs \cite{KD22, NXCH2010}. Recent work includes scalable algorithms that can efficiently cluster large graphs~\cite{LiuWanDan13}, and a regularized method that is able to interpret heterogeneous structures in graphs~\cite{SenChe15, LiKaoRen19}.

The underlying technique can be derived in a number of ways, for example by viewing the problem as a graph partitioning or from a random walk perspective. In ~\cite{Luxburg07}, it is shown that using a graph partition viewpoint allows the clustering problem to be reformulated as a relaxed mincut problem with constraints to balance the clusters. In particular, balancing the clusters by the number of vertices per cluster gives an objective function known as the RatioCut, and relaxing this problem is equivalent to standard spectral clustering using the unnormalized graph Laplacian. Relaxing the normalized cut objective function, which balances clusters by weight, leads instead to the spectral clustering formulation using the normalized graph Laplacian.

As mentioned above, random walks can also be used to reformulate the graph clustering problem. Intuitively, this corresponds to finding a graph partition such that a random walker will remain within a cluster for long periods of time and will rarely jump between clusters. This random-walk interpretation of spectral clustering methods for undirected graphs is well-known, see, e.g.,~\cite{MS01, Luxburg07}. Nevertheless, the dynamical systems perspective and relationships with transfer operators and their eigenvalues and eigenfunctions have to our knowledge neither been used to analyze existing methods in detail, nor to systematically generalize spectral clustering to directed or dynamic graphs. In~\cite{KD22}, we defined transfer operators on graphs and showed that spectral clustering for undirected graphs is equivalent to detecting metastable sets in reversible stochastic dynamical systems. The eigenvalues of transfer operators associated with non-reversible systems, however, are in general complex-valued and the notion of metastability is not suitable for such problems anymore. This motivates the definition of so-called \emph{coherent sets}, which can be regarded as time-dependent metastable sets~\cite{FrSaMo10, BaKo17, KWNS18}. Coherent sets can be detected by computing eigenvalues and eigenfunctions of generalized transfer operators based on the forward--backward dynamics of the system. By defining these operators on graphs, it is possible to develop spectral clustering techniques for directed and time-evolving graphs, thus establishing and exploiting links between dynamical systems theory and graph theory. Applying methods originally developed for the analysis of complex dynamical systems to graphs allows for a rigorous derivation and, at the same time, also intuitive interpretation of spectral clustering techniques.

In terms of existing methods for clustering directed graphs, several proposed approaches are based on symmetrization techniques~\cite{SaPa11}. A simple example is to define a new graph with adjacency matrix $ A + A^\top $, where $ A $ is the adjacency matrix of the original directed graph. This is essentially equivalent to ignoring the directions of the edges. It is also possible to symmetrize the graph by defining an adjacency matrix in terms of the stationary distribution of a random walk~\cite{Gle06}. Both of these proposals have a notable drawback; they fail to recognize similar nodes with common in-links or out-links, and can only cluster based on the existing edges in the graph with directional information removed. Our clustering approach is loosely related to these symmetrization techniques, in that it also leads to real-valued spectra since the resulting operators are self-adjoint (with respect to potentially weighted inner products). However, we do not directly symmetrize the adjacency matrix or other graph-related matrices, but exploit properties of random walk processes and associated transfer operators. A more detailed comparison and numerical results will be presented below.

In this work, we will generalize the definition of transfer operators on graphs, first introduced in~\cite{KD22}, and show how eigenfunctions of these operators are related to classical spectral clustering approaches. The main contributions are:
\begin{itemize}[leftmargin=2ex, itemsep=0ex, topsep=0.5ex]
\item We study the spectral properties of graph transfer operators and their matrix representations and show that the eigenfunctions of these operators can also be obtained by solving associated optimization problems.
\item We illustrate how transfer operators can be used to detect clusters or community structures in directed graphs and provide illustrative interpretations of the identified clusters in terms of coherent sets and block matrices.
\item We define Galerkin projections of transfer operators, show how these coarse-grained operators can be estimated from random-walk data, and analyze the convergence of data-driven approximations.
\item Furthermore, we highlight additional applications of the proposed methods such as graph drawing or network inference, apply our clustering approach to different types of benchmark problems, and compare it with other spectral clustering algorithms.
\end{itemize}
In Section~\ref{sec:Transer operators}, we will define transfer operators, analyze their properties, and highlight relationships with spectral clustering algorithms. Section~\ref{sec:Analysis and approximation} illustrates how transfer operators can be approximated and estimated from data. Additionally, we will point out different interpretations of clustering methods. Numerical results will be presented in Section~\ref{sec:Numerical results} and concluding remarks and open questions in Section~\ref{sec:Conclusion}.

\section{Graphs, transfer operators, and spectral clustering}
\label{sec:Transer operators}

In this section, we will introduce the mathematical tools required for the transfer operator- based identification of clusters in graphs.

\subsection{Directed and undirected graphs}

In what follows, we will mainly consider directed graphs, but the derived methods can, of course, also be applied to undirected graphs. Relationships with well-known clustering techniques for undirected graphs will be discussed below.

\begin{definition}[Directed graph]\label{def: directed_graph}
A \emph{directed graph} $ \mc{G} = (\mc{V}, \mc{E}) $ is given by a set of vertices $ \mc{V} = \{ \mc[1]{v}, \dots, \mc[n]{v} \} $ and edges $ \mc{E} \subseteq \mc{V} \times \mc{V} $.
\end{definition}

The \emph{weighted adjacency matrix} $ A \in \R^{n \times n} $ associated with a graph $ \mc{G} $ is defined by
\begin{equation*}
    a_{ij} =
    \begin{cases}
        w(\mc[i]{v}, \mc[j]{v}), & \text{if } (\mc[i]{v}, \mc[j]{v}) \in \mc{E}, \\
        0, & \text{otherwise},
    \end{cases}
\end{equation*}
where $ w(\mc[i]{v}, \mc[j]{v}) > 0 $ is the weight of the corresponding edge. If the graph is undirected, then the adjacency matrix is symmetric. Additionally, we introduce the row-stochastic \emph{transition probability matrix} $ S = D_\mathscr{o}^{-1} A \in \R^{n \times n} $, with
\begin{equation*}
    D_\mathscr{o} = \diag\big(\mathscr{o}(\mc[1]{v}), \dots, \mathscr{o}(\mc[n]{v})\big)
    \quad \text{and} \quad
    \mathscr{o}(\mc[i]{v}) = \sum_{j=1}^n a_{ij}.
\end{equation*}
That is, $ s_{ij} $ is the probability that a random walker starting in $ \mc[i]{v} $ will go to $ \mc[j]{v} $ in one step. We refer to $ D_\mathscr{o} $ as the matrix of out-degrees, and we will later also make use of the in-degree matrix $ D_\mathscr{i} $, which is defined in a similar way, with
\begin{equation*}
    D_\mathscr{i} = \diag\big(\mathscr{i}(\mc[1]{v}), \dots, \mathscr{i}(\mc[n]{v})\big)
    \quad \text{and} \quad
    \mathscr{i}(\mc[i]{v}) = \sum_{j=1}^n a_{ji}.
\end{equation*}

\subsection{Benchmark problems}

In order to generate benchmark problems with clearly defined clusters, we will use the so-called \emph{directed stochastic block model}.

\begin{definition}[Directed stochastic block model]
The graph $ \mc{G} $ is said to be sampled from the \emph{directed stochastic block model} (DSBM) $ \mathbf{G}(r_b, n_b, E) $, with $ E \in [0, 1]^{r_b \times r_b} $, if the adjacency matrix
\begin{equation*}
    A =
    \begin{bmatrix}
        A_{11} & \dots & A_{1 r_b} \\
        \vdots & \ddots & \vdots \\
        A_{r_b 1} & \dots & A_{r_b r_b}
    \end{bmatrix}
    \in \R^{(r_b \ts n_b) \times (r_b \ts n_b)}
\end{equation*}
is a block matrix whose blocks $ A_{ij} \in R^{n_b \times n_b} $ contain positive entries with probability $ F_{ij} $. Here, $ r_b $ is the number and $ n_b $ the size of the blocks.
\end{definition}

Our definition differs slightly from the DSBM described in \cite{CLSZ20}, in that we allow different probabilities for all the blocks. Undirected graphs can be constructed in a similar way. While the DSBM is a frequently used model to generate benchmark problems, real-world graphs typically exhibit a more complicated structure. An algorithm to generate weighted directed graphs with heterogeneous node-degree distributions and cluster sizes for testing community detection methods is described in \cite{LF09}. Briefly, the graphs in~\cite{LF09} are constructed by randomly assigning in-degrees to each node from a power law distribution and assigning out-degrees from a delta distribution. A mixing parameter is introduced for each of these degrees that is related to the quality of the graph partition---that is, the lower the mixing parameter the better the partitioning of the graph. Cluster sizes are also drawn from a power law distribution and a subgraph is constructed from each of these clusters. The subgraphs are finally joined randomly, with a rewiring process to ensure that the in-degree and out-degree distributions are preserved. We will use this algorithm to generate a set of graphs with different characteristics.

\subsection{Coherent set illustration}

The spectral clustering approach for directed and time-evolving graphs proposed in \cite{KD22} is based on a generalized Laplacian that contains information about \emph{coherent sets}. The detection of such coherent sets requires the notion of transfer operators, which describe the evolution of probability densities or observables. Before we introduce these operators, let us first illustrate the definition of coherence---defined in \cite{Froyland13, BaKo17} for continuous dynamical systems---for directed graphs.

\begin{definition}[Coherent pair]
Let $ S^\tau $ be the flow associated with a dynamical system and $ \tau $ a fixed lag time. Two sets $ \mathbb{A} $ and $ \mathbb{B} $ form a so-called \emph{coherent pair} if $ S^\tau(\mathbb{A}) \approx \mathbb{B} $ and $ S^{-\tau}(\mathbb{B}) \approx \mathbb{A} $.
\end{definition}

That is, the set $ \mathbb{A} $ is almost invariant under the forward--backward dynamics and we call it a \emph{finite-time coherent set}. To avoid that $ (S^{-\tau} \circ S^\tau)(\mathbb{A}) = \mathbb{A} $ for all sets $ \mathbb{A} $, a small random perturbation of the dynamics is required for deterministic systems, see \cite{BaKo17}. In our setting, the dynamics are given by a random walk process on the graph and thus already non-deterministic.

\begin{example} \label{ex:Coherent set illustration}
Figure~\ref{fig:Coherent set illustration} illustrates the difference between a coherent set (green) and a set that is dispersed by the random-walk dynamics (red). The example shows that coherent sets form weakly coupled clusters and can be regarded as a natural generalization of clusters in undirected graphs. \exampleSymbol

\begin{figure}
    \centering
    \begin{minipage}[t]{0.3\textwidth}
        \centering
        \subfiguretitle{(a)}
        \vspace*{1ex}
        \resizebox{0.85\textwidth}{!}{%
        \begin{tikzpicture}[
                >= stealth, 
                semithick 
            ]
            \tikzstyle{every state}=[
                draw=black,
                thick,
                fill=white,
                inner sep=0pt,
                text width=6mm,
                align=center,
                scale=0.6
            ]

            \node[state] (v3) {3};
            \node[state] (v4) [right=0.8cm of v3] {4};
            \node[state] (v1) [below=0.8cm of v4] {1};
            \node[state] (v2) [below=0.8cm of v3] {2};

            \node[state] (v8) [right=1.35cm of v4] {8};
            \node[state] (v5) [above left=0.55cm and 0.55cm of v8] {5};
            \node[state] (v6) [above=1.35cm of v8] {6};
            \node[state] (v7) [right=1.3cm of v5] {7};

            \node[state] (v9) [right=1.35cm of v1] {9};
            \node[state] (v10) [below right=0.55cm and 0.55cm of v9] {10};
            \node[state] (v11) [below=1.35cm of v9] {11};
            \node[state] (v12) [left=1.35cm of v10] {12};

            \path[->] (v1) edge node {} (v2);
            \path[->] (v2) edge node {} (v3);
            \path[->] (v3) edge node {} (v4);
            \path[->] (v4) edge node {} (v1);

            \path[->] (v5.45) edge node {} (v6.225);
            \path[->] (v6.-45) edge node {} (v7.135);
            \path[->] (v7.-135) edge node {} (v8.45);
            \path[->] (v8.135) edge node {} (v5.-45);

            \path[->] (v9.-45) edge node {} (v10.135);
            \path[->] (v10.-135) edge node {} (v11.45);
            \path[->] (v11.135) edge node {} (v12.-45);
            \path[->] (v12.45) edge node {} (v9.225);

            \path[->, dashed] (v4) edge node [dotted] {} (v5);
            \path[->, dashed] (v8) edge node [dotted] {} (v9);
            \path[->, dashed] (v12) edge node [dotted] {} (v1);

            \foreach \Point in {(-0.110, -1.589), (1.474, 0.139), (0.247, -1.350), (1.318, 0.020), (-0.101, -0.061), (1.334, -0.083), (1.438, -1.512), (1.467, 0.080), (1.504, -0.002), (1.462, -1.220), (0.205, -1.418), (0.177, -1.551), (-0.046, 0.243), (1.353, -1.456), (-0.123, 0.079), (-0.211, -0.080), (0.057, -0.000), (0.062, -1.587), (1.468, -1.529), (0.022, -0.157), (0.078, -1.431), (1.466, -1.454), (0.033, -1.295), (-0.188, -1.547), (-0.183, 0.085), (1.441, -0.070), (0.150, 0.014), (1.241, -0.102), (1.261, -1.305), (1.340, -1.517), (-0.058, -1.484), (1.485, 0.048), (1.430, -0.142), (0.004, -1.495), (0.054, -1.482), (1.293, -1.507), (0.087, 0.098), (-0.176, -0.015), (1.425, -1.461), (0.076, -1.375), (1.385, -1.456), (1.468, 0.085), (0.047, -1.424), (1.492, -0.091), (1.414, -1.260), (1.381, -0.084), (0.194, -0.025), (0.044, -0.080), (0.029, 0.047), (0.000, 0.033), (1.520, -1.540), (1.421, -0.199), (1.403, -1.207), (0.194, -1.401), (1.366, -1.347), (1.506, 0.001), (1.561, -1.482), (-0.200, -0.075), (-0.020, -0.037), (-0.085, 0.134), (1.238, -0.049), (1.303, -1.447), (1.333, 0.048), (-0.104, -0.082), (0.010, 0.108), (1.215, -1.220), (-0.088, 0.065), (0.105, 0.132), (1.348, 0.002), (-0.216, -1.461), (1.427, -1.364), (1.365, -1.387), (0.267, -0.001), (1.471, -0.046), (-0.153, -0.035), (0.056, -1.480), (1.215, 0.005), (-0.044, 0.128), (1.458, 0.173), (1.490, -1.533), (1.248, -1.350), (1.431, -0.094), (1.524, 0.003), (1.412, 0.027), (0.075, -1.248), (1.374, -1.573), (1.501, -0.093), (1.524, -0.059), (0.149, -1.481), (1.289, -1.304), (-0.136, -1.332), (1.444, -0.014), (1.366, 0.206), (-0.037, -0.219), (-0.094, -0.081), (-0.058, -1.229), (-0.039, -0.069), (1.355, 0.071), (-0.132, -1.337), (-0.133, -1.413)}
                \draw[green,fill=green] \Point circle (0.1ex);

            \foreach \Point in {(3.444, 0.041), (3.266, 0.051), (3.297, -0.016), (4.349, -2.513), (4.211, -2.395), (4.327, -2.373), (3.134, -0.048), (4.433, 1.007), (4.203, -2.208), (3.136, -1.309), (3.134, -1.340), (4.082, 0.953), (3.219, -1.377), (4.183, 0.931), (3.523, -0.101), (4.335, 0.898), (3.310, 0.127), (4.448, -2.306), (4.415, -2.209), (3.333, 0.040), (3.173, 0.092), (3.174, -1.491), (4.174, -2.505), (4.309, 0.931), (4.144, 0.955), (4.076, 0.853), (3.381, -1.573), (4.450, -2.481), (3.344, -0.072), (3.333, -1.359), (4.372, 1.097), (3.307, -0.018), (4.324, -2.184), (3.440, -1.564), (3.277, -1.313), (4.206, -2.278), (3.303, 0.160), (3.403, -1.368), (3.560, -0.039), (4.337, -2.289), (4.236, -2.331), (4.158, -2.247), (3.540, -1.369), (4.181, -2.162), (3.222, -0.127), (4.164, 0.948), (4.111, 1.100), (4.507, -2.262), (4.318, 0.890), (3.258, -1.232), (4.372, -2.460), (4.272, -2.361), (3.414, 0.098), (4.350, 1.073), (3.380, -1.232), (4.166, 1.183), (3.418, 0.146), (4.278, -2.427), (4.304, 1.023), (4.349, -2.329), (3.418, 0.050), (3.273, -0.137), (4.296, 0.992), (4.300, 0.827), (4.535, -2.330), (4.384, 0.855), (4.513, -2.328), (3.379, -0.081), (3.168, 0.025), (3.507, 0.049), (4.457, 0.815), (3.449, -1.296), (3.340, -0.095), (3.315, -1.434), (4.098, -2.326), (3.422, -1.408), (3.409, -0.010), (4.479, -2.420), (4.323, 1.201), (4.192, 0.982), (3.427, 0.009), (4.345, 1.005), (4.467, -2.372), (4.077, 1.054), (4.214, 0.980), (3.413, -0.194), (4.214, 1.034), (4.072, -2.462), (3.268, -0.049), (3.417, -0.133), (3.537, 0.110), (3.397, -1.394), (3.406, -1.242), (4.313, -2.127), (3.530, -1.448), (4.261, -2.189), (4.175, 0.809), (3.377, 0.044), (4.270, 0.879), (3.311, -1.297)}
                \draw[red,fill=red] \Point circle (0.1ex);
        \end{tikzpicture}}
    \end{minipage}
    \begin{minipage}[t]{0.3\textwidth}
        \centering
        \subfiguretitle{(b)}
        \vspace*{1ex}
        \resizebox{0.85\textwidth}{!}{%
        \begin{tikzpicture}[
                >= stealth, 
                semithick 
            ]
            \tikzstyle{every state}=[
                draw = black,
                thick,
                fill = white,
                inner sep=0pt,
                text width=6mm,
                align=center,
                scale=0.6
            ]

            \node[state] (v3) {3};
            \node[state] (v4) [right=0.8cm of v3] {4};
            \node[state] (v1) [below=0.8cm of v4] {1};
            \node[state] (v2) [below=0.8cm of v3] {2};

            \node[state] (v8) [right=1.35cm of v4] {8};
            \node[state] (v5) [above left=0.55cm and 0.55cm of v8] {5};
            \node[state] (v6) [above=1.35cm of v8] {6};
            \node[state] (v7) [right=1.3cm of v5] {7};

            \node[state] (v9) [right=1.35cm of v1] {9};
            \node[state] (v10) [below right=0.55cm and 0.55cm of v9] {10};
            \node[state] (v11) [below=1.35cm of v9] {11};
            \node[state] (v12) [left=1.35cm of v10] {12};

            \path[->] (v1) edge node {} (v2);
            \path[->] (v2) edge node {} (v3);
            \path[->] (v3) edge node {} (v4);
            \path[->] (v4) edge node {} (v1);

            \path[->] (v5.45) edge node {} (v6.225);
            \path[->] (v6.-45) edge node {} (v7.135);
            \path[->] (v7.-135) edge node {} (v8.45);
            \path[->] (v8.135) edge node {} (v5.-45);

            \path[->] (v9.-45) edge node {} (v10.135);
            \path[->] (v10.-135) edge node {} (v11.45);
            \path[->] (v11.135) edge node {} (v12.-45);
            \path[->] (v12.45) edge node {} (v9.225);

            \path[->, dashed] (v4) edge node [dotted] {} (v5);
            \path[->, dashed] (v8) edge node [dotted] {} (v9);
            \path[->, dashed] (v12) edge node [dotted] {} (v1);

            \foreach \Point in {(0.065, -0.139), (-0.134, 0.113), (0.000, 0.038), (1.310, -0.020), (0.031, 0.017), (1.579, -1.462), (-0.015, -1.317), (1.495, -0.197), (1.361, -1.468), (-0.002, -1.348), (1.444, 0.114), (1.445, 0.032), (1.495, -1.357), (2.218, 0.986), (1.235, -1.289), (-0.211, -1.455), (0.009, -1.404), (0.188, -1.394), (1.339, -1.341), (-0.121, -1.345), (0.025, -0.039), (1.338, -0.119), (1.212, -1.442), (0.078, -1.471), (0.158, -1.489), (0.214, -1.469), (1.410, -1.257), (1.523, -1.476), (0.074, -1.511), (1.229, -1.334), (1.337, -0.214), (-0.006, 0.139), (1.522, -0.150), (1.404, 0.080), (0.097, 0.156), (1.559, -1.314), (1.258, 0.004), (-0.138, 0.123), (1.374, 0.200), (0.084, -1.434), (-0.048, -0.089), (1.526, -1.362), (1.388, 0.042), (0.101, -0.087), (1.321, -1.331), (0.159, 0.125), (-0.012, -1.533), (1.266, -1.493), (1.513, -1.455), (1.428, -1.379), (1.250, -1.275), (0.202, -0.004), (1.600, -0.076), (0.060, 0.212), (0.131, -0.070), (-0.102, 0.005), (1.376, -0.006), (0.127, 0.022), (0.166, -0.004), (1.387, 0.079), (0.043, -0.021), (-0.159, -1.512), (1.462, 0.054), (-0.147, -1.388), (0.053, -0.118), (-0.100, 0.119), (-0.175, 0.045), (1.498, -1.579), (1.387, -1.518), (0.009, -1.442), (1.25, 0.122), (0.060, -0.062), (0.053, -1.534), (-0.123, -1.515), (1.389, -0.052), (0.191, -0.007), (0.056, 0.057), (1.437, -1.381), (0.124, -1.516), (0.164, 0.069), (-0.115, -1.289), (0.118, -1.229), (-0.042, -1.236), (1.426, -0.146), (-0.078, -1.307), (0.179, -0.118), (-0.011, -1.466), (0.188, -0.090), (1.533, -0.062), (0.078, -0.132), (1.250, -0.120), (1.432, -1.358), (1.318, -0.097), (1.461, -1.504), (-0.003, -1.622), (1.232, -1.404), (1.512, 0.165), (1.468, -1.351), (1.341, -1.402), (1.559, -1.540)}
                \draw[green,fill=green] \Point circle (0.1ex);

            \foreach \Point in {(4.190, -2.357), (4.271, 0.957), (3.166, -3.515), (4.291, 0.938), (2.238, 1.115), (4.410, -2.287), (3.348, -3.364), (4.480, 0.882), (3.433, -1.348), (2.411, 0.966), (3.530, 0.109), (4.147, -2.318), (4.358, -2.367), (3.369, 0.130), (3.372, 0.170), (4.160, 0.929), (4.424, 1.022), (4.341, -2.256), (3.410, 0.032), (4.370, -2.216), (3.431, -1.563), (2.262, 0.782), (2.539, 0.941), (3.342, -3.528), (3.273, -1.305), (3.242, -3.175), (3.201, -0.124), (3.393, -3.502), (3.103, -1.350), (4.130, 0.925), (2.301, 1.099), (4.065, 0.981), (4.318, -2.555), (4.189, 0.981), (4.212, 1.105), (3.361, -0.253), (3.140, -0.023), (3.172, -0.062), (2.300, 0.941), (4.278, 0.982), (4.380, -2.366), (4.409, -2.268), (3.372, 0.194), (3.340, 0.049), (3.362, -3.218), (4.170, -2.440), (3.319, -0.013), (4.431, 0.926), (2.265, 0.822), (3.188, -3.272), (3.141, 0.031), (3.357, -1.354), (3.243, -1.376), (3.351, -3.268), (4.190, 1.161), (2.457, 0.994), (2.353, 0.880), (2.411, 1.030), (4.385, -2.352), (3.441, -0.061), (3.194, -3.452), (3.498, 0.065), (4.275, -2.375), (4.534, -2.377), (3.507, -3.355), (3.318, -0.056), (3.306, -0.012), (3.146, -1.313), (3.333, -1.236), (3.463, -0.116), (3.406, -3.457), (4.196, -2.386), (3.203, -0.114), (4.315, -2.480), (4.141, -2.208), (3.454, 0.113), (4.303, 0.768), (2.482, 1.142), (4.248, -2.147), (3.183, -1.559), (3.292, -0.052), (4.351, -2.278), (4.279, 0.870), (3.420, -0.145), (4.499, -2.242), (3.248, -0.055), (3.450, 0.220), (3.315, -0.032), (3.403, -1.426), (4.214, -2.475), (2.184, 1.062), (4.491, 1.007), (3.384, -0.080), (3.403, -3.231), (3.187, 0.036), (3.367, 0.030), (2.349, 1.015), (2.240, 1.025), (3.369, -1.436), (4.449, -2.503)}
                \draw[red,fill=red] \Point circle (0.1ex);
        \end{tikzpicture}}
    \end{minipage}
    \begin{minipage}[t]{0.3\textwidth}
        \centering
        \subfiguretitle{(c)}
        \vspace*{1ex}
        \resizebox{0.85\textwidth}{!}{%
        \begin{tikzpicture}[
                >= stealth, 
                semithick 
            ]
            \tikzstyle{every state}=[
                draw = black,
                thick,
                fill = white,
                inner sep=0pt,
                text width=6mm,
                align=center,
                scale=0.6
            ]

            \node[state] (v3) {3};
            \node[state] (v4) [right=0.8cm of v3] {4};
            \node[state] (v1) [below=0.8cm of v4] {1};
            \node[state] (v2) [below=0.8cm of v3] {2};

            \node[state] (v8) [right=1.35cm of v4] {8};
            \node[state] (v5) [above left=0.55cm and 0.55cm of v8] {5};
            \node[state] (v6) [above=1.35cm of v8] {6};
            \node[state] (v7) [right=1.3cm of v5] {7};

            \node[state] (v9) [right=1.35cm of v1] {9};
            \node[state] (v10) [below right=0.55cm and 0.55cm of v9] {10};
            \node[state] (v11) [below=1.35cm of v9] {11};
            \node[state] (v12) [left=1.35cm of v10] {12};

            \path[->] (v1) edge node {} (v2);
            \path[->] (v2) edge node {} (v3);
            \path[->] (v3) edge node {} (v4);
            \path[->] (v4) edge node {} (v1);

            \path[->] (v5.45) edge node {} (v6.225);
            \path[->] (v6.-45) edge node {} (v7.135);
            \path[->] (v7.-135) edge node {} (v8.45);
            \path[->] (v8.135) edge node {} (v5.-45);

            \path[->] (v9.-45) edge node {} (v10.135);
            \path[->] (v10.-135) edge node {} (v11.45);
            \path[->] (v11.135) edge node {} (v12.-45);
            \path[->] (v12.45) edge node {} (v9.225);

            \path[->, dashed] (v4) edge node [dotted] {} (v5);
            \path[->, dashed] (v8) edge node [dotted] {} (v9);
            \path[->, dashed] (v12) edge node [dotted] {} (v1);

            \foreach \Point in {(1.325, -1.357), (1.357, 0.012), (1.442, -1.360), (3.498, -0.029), (1.153, 0.025), (-0.093, 0.053), (1.267, 0.102), (-0.090, -0.066), (1.440, 0.235), (-0.095, 0.179), (1.449, -1.489), (1.476, 0.022), (-0.047, -0.144), (2.345, -2.366), (0.053, -1.374), (1.367, -1.293), (-0.040, 0.062), (-0.116, -1.419), (1.429, -1.328), (1.404, 0.034), (0.126, 0.124), (1.344, 0.023), (0.155, -1.383), (1.403, 0.142), (1.489, -0.066), (0.000, -1.512), (1.375, 0.090), (-0.129, -1.298), (1.362, -1.470), (-0.073, -1.613), (1.391, -1.515), (1.232, -1.400), (-0.059, -0.040), (0.212, 0.016), (-0.165, -0.029), (0.160, -0.176), (1.306, -0.131), (1.287, -1.347), (0.121, 0.190), (0.118, -0.076), (-0.106, -0.186), (1.379, 0.069), (0.150, -1.392), (1.354, -1.378), (-0.082, -1.391), (1.392, -0.124), (-0.034, -1.354), (0.142, -1.345), (1.594, -1.469), (0.019, -1.417), (0.015, -0.108), (1.390, -0.032), (-0.037, -0.036), (-0.026, 0.001), (1.530, -0.038), (-0.001, -1.565), (0.019, -1.392), (-0.065, -1.363), (-0.029, -0.227), (-0.113, -1.312), (-0.092, -1.389), (0.112, 0.072), (1.474, -0.136), (1.529, 0.057), (1.305, -0.099), (-0.241, -0.054), (0.097, -1.298), (1.421, -1.297), (1.411, -1.267), (0.122, -0.127), (1.484, 0.037), (-0.182, 0.084), (1.355, -0.057), (1.457, -1.411), (0.010, 0.115), (-0.050, -1.376), (1.477, -1.517), (1.410, -0.017), (0.046, -1.374), (1.403, -1.482), (1.492, 0.149), (0.049, -1.315), (0.005, -0.039), (1.418, -1.555), (0.166, 0.093), (-0.183, -1.531), (0.058, -1.416), (-0.003, -1.560), (0.013, -1.382), (0.064, -1.547), (1.502, -0.106), (1.406, -1.249), (1.533, 0.039), (1.506, -0.115), (0.082, -1.309), (1.297, -1.386), (0.186, -1.304), (1.524, -1.339), (1.284, -1.590), (-0.008, -1.502)}
                \draw[green,fill=green] \Point circle (0.1ex);

            \foreach \Point in {(3.394, 0.114), (3.527, -1.402), (3.396, 0.181), (4.165, 1.142), (2.399, 0.883), (4.377, 1.070), (3.409, -3.339), (2.343, -2.129), (3.479, 0.170), (4.299, 0.934), (3.469, -1.538), (2.396, -2.422), (4.303, -2.283), (2.443, -2.385), (3.320, -3.502), (4.103, 1.182), (3.386, 2.006), (3.223, -0.041), (4.048, 0.815), (4.146, -2.541), (3.210, -0.125), (3.271, 0.123), (4.450, 0.896), (4.329, -2.367), (4.132, 0.872), (4.112, -2.262), (4.247, 1.083), (3.275, -1.441), (3.403, -0.002), (4.444, -2.437), (3.499, 1.917), (4.235, 0.930), (4.196, 1.143), (3.270, 0.128), (3.336, -0.007), (4.265, 0.968), (3.217, -1.500), (2.394, -2.378), (3.368, 0.068), (4.315, -2.393), (4.292, -2.214), (2.417, -2.270), (4.341, 1.071), (3.273, 0.249), (4.285, -2.346), (4.150, -2.241), (2.307, -2.331), (3.412, -1.313), (3.386, -1.653), (3.339, -0.017), (3.394, -0.030), (3.364, -1.390), (3.333, -1.332), (3.358, 0.088), (3.291, -1.241), (2.265, 0.946), (4.102, 0.910), (3.477, -3.303), (3.254, -1.349), (2.278, 0.764), (4.196, 1.174), (2.372, 0.777), (4.371, -2.154), (4.296, -2.365), (3.411, -1.483), (3.381, -1.144), (4.162, 0.982), (3.392, 0.045), (3.438, -1.421), (4.225, -2.356), (3.156, -1.422), (3.573, -0.056), (4.337, -2.298), (3.391, 0.008), (4.185, 1.012), (4.326, -2.333), (2.512, -2.367), (3.231, -0.077), (4.321, 1.080), (4.186, 0.840), (2.247, 1.078), (2.444, -2.270), (4.399, -2.353), (3.366, -1.327), (3.344, 0.210), (3.252, -1.389), (2.395, -2.568), (4.149, -2.333), (3.278, 1.950), (3.253, -0.010), (4.310, -2.577), (3.427, -0.206), (3.376, -3.348), (3.374, -1.244), (3.193, -3.420), (3.272, 0.158), (4.384, 1.047), (3.191, -3.397), (2.404, 1.011), (4.184, 1.129)}
                \draw[red,fill=red] \Point circle (0.1ex);
        \end{tikzpicture}}
    \end{minipage}
    \caption{(a) Two different sets of random walkers starting in $ \{ \mc[1]{v}, \dots, \mc[4]{v} \} $ and $ \{ \mc[7]{v}, \dots, \mc[10]{v} \} $ marked in green and red, respectively. The weight of the solid edges is 1 and the weight of the dashed edges is 0.01. (b) Positions after one step forward. Only one green random walker leaves the set. (c) Positions after one step forward and one step backward. The green set is coherent---only two random walkers escaped---, whereas the red set is clearly less invariant under the forward--backward dynamics.}
    \label{fig:Coherent set illustration}
\end{figure}
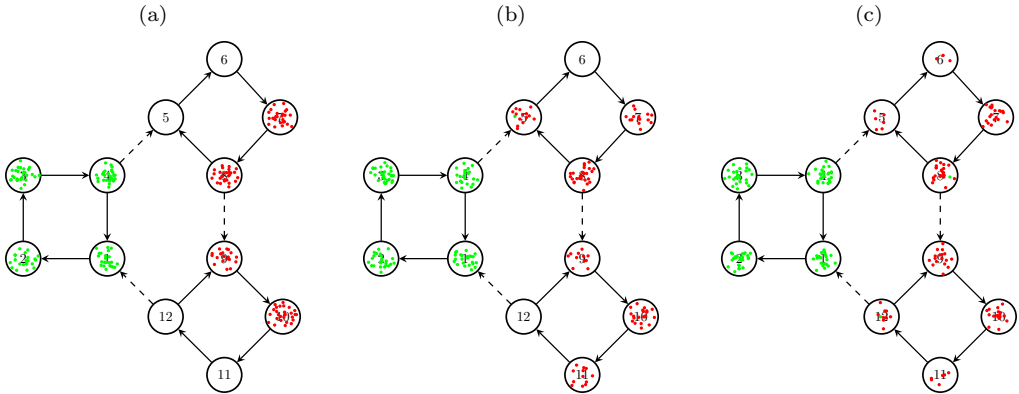

\end{example}

This notion of coherence can be easily extended to time-evolving networks as shown in~\cite{KD22}. The structure of the clusters may then also change in time.

\subsection{Transfer operators}

Transfer operators on graphs can be either directly expressed in terms of graph-related matrices (such as the transition probability matrix) or estimated from random walk data. We will start with a formal definition of transfer operators. In what follows, $ \mathbb{U} = \{ f \colon \mathcal{V} \to \R \} $ denotes the set of all real-valued functions defined on the set of vertices $ \mathcal{V} $ of a graph $ \mc{G} $.

\begin{definition}[Perron--Frobenius \& Koopman operators]
Let $ S $ be the transition probability matrix associated with the graph $ \mc{G} $.
\begin{enumerate}[leftmargin=3.5ex, itemsep=0ex, topsep=0.5ex, label=\roman*)]
\item For a function $ \rho \in \mathbb{U} $, the \emph{Perron--Frobenius operator} $ \mathcal{P} \colon \mathbb{U} \to \mathbb{U} $ is defined by
\begin{equation*}
    \mathcal{P} \rho(\mc[i]{v}) = \sum_{j=1}^n s_{ji} \ts \rho(\mc[j]{v}).
\end{equation*}
\item Similarly, for a function $ f \in \mathbb{U} $, the \emph{Koopman operator} $ \mathcal{K} \colon \mathbb{U} \to \mathbb{U} $ is given by
\begin{equation*}
    \mathcal{K} f(\mc[i]{v}) = \sum_{j=1}^n s_{ij} \ts f(\mc[j]{v}).
\end{equation*}
\end{enumerate}
\end{definition}

The Koopman operator is the adjoint of the Perron--Frobenius operator with respect to the standard inner product.

\begin{definition}[Probability density]
A \emph{probability density} is a function $ \mu \in \mathbb{U} $ that satisfies $ \mu(\mc[i]{v}) \ge 0 $ and
\begin{equation*}
    \sum_{i=1}^n \mu(\mc[i]{v}) = 1.
\end{equation*}
\end{definition}

This allows us to define the Perron--Frobenius operator with respect to different probability densities. We define the $ \mu $-weighted inner product by
\begin{equation*}
    \innerprod{f}{g}_\mu = \sum_{i=1}^n \mu(\mc[i]{v}) \ts f(\mc[i]{v}) \ts g(\mc[i]{v}).
\end{equation*}

\begin{definition}[Reversibility]
The system is called \emph{reversible} if there exists a probability density $ \pi $ that satisfies the \emph{detailed balance condition} $ \pi(\mc[i]{v}) \ts s_{ij} = \pi(\mc[j]{v}) \ts s_{ji} $ for all $ \mc[i]{v}, \mc[j]{v} \in \mc{V} $.
\end{definition}

Random walks on undirected graphs, for instance, are reversible~\cite{Lov93, Aldous14}. We will use this property to analyze the relationships between conventional spectral clustering and our approach.

\begin{definition}[Reweighted Perron--Frobenius operator]
Let $ \mu $ be the initial density and $ \nu = \mathcal{P} \mu $ the resulting image density, i.e.,
\begin{equation*}
    \nu(\mc[i]{v}) = \sum_{j=1}^n s_{ji} \ts \mu(\mc[j]{v}).
\end{equation*}
Furthermore, assume that $ \mu $ and $ \nu $ are strictly positive. Given a function $ u \in \mathbb{U} $, we define the \emph{reweighted Perron--Frobenius operator} $ \mathcal{T} \colon \mathbb{U} \to \mathbb{U} $ by
\begin{equation*}
    \mathcal{T} u(\mc[i]{v}) = \frac{1}{\nu(\mc[i]{v})} \sum_{j=1}^n s_{ji} \ts \mu(\mc[j]{v}) \ts u(\mc[j]{v}).
\end{equation*}
\end{definition}

The reweighted Perron--Frobenius operator describes the evolution of functions with respect to the initial and final densities $ \mu $ and $ \nu $. The assumption that $ \mu(\mc[i]{v}) > 0 $ implies that the probability that a random walker starts in $ \mc[i]{v} $ is non-zero. Correspondingly, $ \nu(\mc[i]{v}) > 0 $ means that $ \mc[i]{v} $ is reachable from at least one vertex, which could be $ \mc[i]{v} $ itself. Adding self-loops thus regularizes the problem.

\begin{definition}[Forward--backward \& backward--forward operators]
Let $ f, u \in \mathbb{U} $ as before.
\begin{enumerate}[leftmargin=3.5ex, itemsep=0ex, topsep=0.5ex, label=\roman*)]
\item We define the \emph{forward--backward operator} $ \mathcal{F} \colon \mathbb{U} \to \mathbb{U} $ by
\begin{equation*}
    \mathcal{F} u(\mc[i]{v})
        = \mathcal{K} \mathcal{T} u(\mc[i]{v})
        = \sum_{j=1}^n s_{ij} \frac{1}{\nu(\mc[j]{v})} \sum_{k=1}^n s_{kj} \ts \mu(\mc[k]{v}) \ts u(\mc[k]{v}).
\end{equation*}
\item Analogously, the \emph{backward--forward operator} $ \mathcal{B} \colon \mathbb{U} \to \mathbb{U} $ is defined by
\begin{equation*}
    \mathcal{B} f(\mc[i]{v})
        = \mathcal{T} \mathcal{K} f(\mc[i]{v})
        = \frac{1}{\nu(\mc[i]{v})} \sum_{j=1}^n s_{ji} \ts \mu(\mc[j]{v}) \sum_{k=1}^n s_{jk} \ts f(\mc[k]{v}).
\end{equation*}
\end{enumerate}
\end{definition}

In order to detect coherent sets, we want to find eigenfunctions $ \varphi_\ell $ of the forward--backward operator $ \mathcal{F} $ corresponding to eigenvalues $ \lambda_\ell \approx 1 $ such that $ \mathcal{F} \varphi_\ell = \lambda_\ell \ts \varphi_\ell \approx \varphi_\ell $. These functions are almost invariant under the forward--backward dynamics. Defining $ \psi_\ell := \frac{1}{\sqrt{\lambda_\ell}} \mathcal{T} \varphi_\ell $, this results in
\begin{equation*}
    \mathcal{K} \psi_\ell = \sqrt{\lambda_\ell} \ts \varphi_\ell, \quad
    \mathcal{T} \varphi_\ell = \sqrt{\lambda_\ell} \ts \psi_\ell, \quad
    \mathcal{F} \varphi_\ell = \lambda_\ell \ts \varphi_\ell, \quad
    \mathcal{B} \psi_\ell = \lambda_\ell \ts \psi_\ell.
\end{equation*}
That is, the functions $ \varphi_\ell $ and $ \psi_\ell $ come in pairs.

\subsection{Covariance and cross-covariance operators}

In addition to these different types of transfer operators, we define (uncentered) covariance and cross-covariance operators.

\begin{definition}[Covariance operators]
Let $ S $ be the transition probability matrix and $ \mu $ and $ \nu $ the densities defined above. Given functions $ f, g \in \mathbb{U} $, we call $ \mathcal{C}_{xx}, \mathcal{C}_{yy} \colon \mathbb{U} \to \mathbb{U} $, with
\begin{equation*}
    \mathcal{C}_{xx} \ts f(\mc[i]{v}) = \mu(\mc[i]{v}) \ts f(\mc[i]{v})
    \quad \text{and} \quad
       \mathcal{C}_{yy} \ts g(\mc[i]{v}) = \nu(\mc[i]{v}) \ts g(\mc[i]{v}),
 \end{equation*}
\emph{covariance operators} and $ \mathcal{C}_{xy} \colon \mathbb{U} \to \mathbb{U} $, with
\begin{equation*}
    \mathcal{C}_{xy} \ts g(\mc[i]{v}) = \sum_{j=1}^n \mu(\mc[i]{v}) \ts s_{ij} \ts g(\mc[j]{v}),
\end{equation*}
\emph{cross-covariance operator}.
\end{definition}

Let $ X \sim \mu $ and $ Y \sim \nu $ be random variables representing the initial position of the random walker and the position after one step, respectively. It follows that
\begin{align*}
    \innerprod{f}{\mathcal{C}_{xx} \ts f} &= \innerprod{f}{f}_\mu = \sum_{i=1}^n \mu(\mc[i]{v}) \ts f(\mc[i]{v})^2 = \mathbb{E}_X\big[f(X)^2\big], \\
     \innerprod{g}{\mathcal{C}_{yy} \ts g} &= \innerprod{g}{g}_\nu = \sum_{i=1}^n \nu(\mc[i]{v}) \ts g(\mc[i]{v})^2 = \mathbb{E}_Y\big[g(Y)^2\big],
\end{align*}
and, since the joint probability distribution is given by $ \mathbb{P}[X = \mc[i]{v}, Y = \mc[j]{v}] = \mu(\mc[i]{v}) \ts s_{ij} $,
\begin{equation*}
    \innerprod{f}{\mathcal{C}_{xy} \ts g} = \innerprod{f}{\mathcal{K} g}_\mu =  \sum_{i=1}^n \sum_{j=1}^n \mu(\mc[i]{v}) \ts s_{ij} \ts f(\mc[i]{v}) \ts g(\mc[j]{v}) = \mathbb{E}_{XY}\big[f(X) \ts g(Y)\big].
\end{equation*}

\subsection{Matrix representations of operators}

In order to simplify the notation, given any function $ g \in \mathbb{U} $, let $ \boldsymbol{g} \in \R^n $ be defined by
\begin{equation*}
    \boldsymbol{g} = \big[\, g(\mc[1]{v}), \dots, g(\mc[n]{v})\, \big]^\top,
\end{equation*}
i.e., $ \boldsymbol{\rho}, \boldsymbol{f}, \boldsymbol{u}, \boldsymbol{\mu}, \boldsymbol{\nu} \in \R^n $ are the vector representations of the functions $ \rho, f, u, \mu, \nu \in \mathbb{U} $, respectively. Additionally, we define diagonal matrices
\begin{equation*}
    D_\mu = \diag(\boldsymbol{\mu})
    \quad \text{and} \quad
    D_\nu = \diag(\boldsymbol{\nu})
\end{equation*}
so that we can write
\begin{align*}
    \mathcal{P} \boldsymbol{\rho} &:= P \ts \boldsymbol{\rho} = S^\top \boldsymbol{\rho}, \\
    \mathcal{K} \boldsymbol{f} &:= K \ts \boldsymbol{f} = S \boldsymbol{f}, \\
    \mathcal{T} \ts \boldsymbol{u} &:= T \ts \boldsymbol{u} = D_\nu^{-1} \ts S^\top D_\mu \ts \boldsymbol{u}, \\
    \mathcal{F} \ts \boldsymbol{u} &:= F \ts \boldsymbol{u} = S \ts D_\nu^{-1} \ts S^\top D_\mu \ts \boldsymbol{u}, \\
    \mathcal{B} \ts \boldsymbol{f} &:= B \ts \boldsymbol{f} = D_\nu^{-1} \ts S^\top D_\mu \ts S \ts \boldsymbol{f},
\end{align*}
where the operators (with a slight abuse of notation) are applied component-wise. That is, $ P, K, T, F, B $ are the matrix representations of the corresponding operators $ \mathcal{P}, \mathcal{K}, \mathcal{T}, \mathcal{F}, \mathcal{B} $, respectively. The matrices $ D_\mu $ and $ D_\nu $ are invertible since we assumed $ \mu $ and $ \nu $ to be strictly positive. For the covariance and cross-covariance operators $ \mathcal{C}_{xx} $, $ \mathcal{C}_{yy} $, and $ \mathcal{C}_{xy} $, we obtain the matrix representations $ C_{xx} = D_\mu $, $ C_{yy} = D_\nu $, and $ C_{xy} = D_\mu \ts S $ and can thus write
\begin{equation*}
    \innerprod{f}{\mathcal{C}_{xx} \ts f} = \mathbf{f}^\top D_\mu \ts \mathbf{f},
    \quad
    \innerprod{g}{\mathcal{C}_{yy} \ts g} = \mathbf{g}^\top D_\nu \ts \mathbf{g},
    \quad \text{and} \quad
    \innerprod{f}{\mathcal{C}_{xy} \ts g} = \mathbf{f}^\top D_\mu S \ts \mathbf{g}.
\end{equation*}
In what follows, we will use functions and their vector representations as well as operators and their matrix representations interchangeably.

\begin{remark}
Let $ \mathds{1} \in \R^n $ be the vector of all ones. In \cite{KD22}, we assumed that the random walkers are initially uniformly distributed and defined $ \widetilde{F} = S \ts \widetilde{D}_\nu^{-1} S^\top $, with $ \widetilde{D}_\nu = \diag(S^\top \mathds{1}) $, whereas setting $ \boldsymbol{\mu} = \frac{1}{n} \mathds{1} $ results in $ D_\nu = \frac{1}{n}\diag(S^\top \mathds{1}) $. However, for this special case $ F = S \ts D_\nu^{-1} S^\top D_\mu = S \ts \widetilde{D}_\nu^{-1} S^\top = \widetilde{F} $ and the definitions are equivalent.
\end{remark}

The properties of transfer operators are analyzed in detail in Appendix~\ref{app:Properties of transfer operators}. We in particular show that the eigenvalues of the operators $ \mathcal{F} $ and $ \mathcal{B} $ are contained in the interval $ [0, 1] $.

\subsection{Associated graph structures}

How are the graph structures associated with the matrix representations $ F $ and $ B $ of the forward--backward and backward--forward operators $ \mathcal{F} $ and $ \mathcal{B} $ related to the original directed graph $ \mc{G} $?

\begin{lemma} \label{lem:graph structures}
It holds that the entry $ f_{ij} $ of the matrix $ F $ is nonzero if and only if there exists an index $ k $ such that $ (\mc[i]{v}, \mc[k]{v}) \in \mc{E} $ and $ (\mc[j]{v}, \mc[k]{v}) \in \mc{E} $. That is, a random walker can go forward from $ \mc[i]{v} $ to $ \mc[k]{v} $ and then backward to $ \mc[j]{v} $. Similarly, the entry $ b_{ij} $ of the matrix $ B $ is nonzero if and only if there exists an index $ k $ such that $ (\mc[k]{v}, \mc[i]{v}) \in \mc{E} $ and $ (\mc[k]{v}, \mc[j]{v}) \in \mc{E} $.
\end{lemma}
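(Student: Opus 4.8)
The plan is to read off the entries of the matrix representations $ F $ and $ B $ explicitly and then exploit the fact that all quantities involved are nonnegative, so that no cancellation can occur in the relevant sums.

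First I would expand $ F = S \ts D_\nu^{-1} S^\top D_\mu $. Since $ D_\mu $ and $ D_\nu $ are diagonal, a direct multiplication gives
\begin{equation*}
    f_{ij} = \mu(\mc[j]{v}) \sum_{k=1}^n \frac{s_{ik} \ts s_{jk}}{\nu(\mc[k]{v})};
\end{equation*}
equivalently, this is just the coefficient of $ u(\mc[j]{v}) $ in the defining formula for $ \mathcal{F} u(\mc[i]{v}) $ after relabelling the summation index. The crucial observation is that every factor appearing on the right-hand side is nonnegative: $ \mu(\mc[j]{v}) > 0 $ and $ \nu(\mc[k]{v}) > 0 $ by the standing positivity assumptions on the densities, and $ s_{ik}, s_{jk} \ge 0 $ since $ S $ is row-stochastic. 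Hence $ f_{ij} $ is a sum of nonnegative terms, and it is nonzero precisely when at least one summand is strictly positive, i.e., when there exists an index $ k $ with $ s_{ik} > 0 $ and $ s_{jk} > 0 $.

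It then remains to translate these inequalities back into statements about edges. Because $ s_{ik} = a_{ik} / \mathscr{o}(\mc[i]{v}) $ and all edge weights are strictly positive, we have $ s_{ik} > 0 $ if and only if $ a_{ik} > 0 $, which holds if and only if $ (\mc[i]{v}, \mc[k]{v}) \in \mc{E} $, and similarly for $ s_{jk} $. This gives exactly the claimed characterization of $ f_{ij} $. For $ B = D_\nu^{-1} S^\top D_\mu \ts S $ the same computation yields
\begin{equation*}
    b_{ij} = \frac{1}{\nu(\mc[i]{v})} \sum_{k=1}^n s_{ki} \ts \mu(\mc[k]{v}) \ts s_{kj},
\end{equation*}
and repeating the nonnegativity argument together with the edge translation shows that $ b_{ij} \neq 0 $ if and only if there is some $ k $ with $ (\mc[k]{v}, \mc[i]{v}) \in \mc{E} $ and $ (\mc[k]{v}, \mc[j]{v}) \in \mc{E} $.

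I do not expect a genuine obstacle here; the only point requiring care is the ``only if'' direction, where one must rule out that distinct terms in the sums defining $ f_{ij} $ and $ b_{ij} $ cancel. This is precisely where positivity of $ \mu $ and $ \nu $ and nonnegativity of $ S $ enter, and it is what distinguishes the argument from the situation for an arbitrary matrix in place of $ S $, where the statement would generally fail.
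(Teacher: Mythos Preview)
Your proof is correct and follows essentially the same route as the paper: write out $f_{ij}$ and $b_{ij}$ as the explicit sums, use the equivalence $s_{ij}\neq 0 \Leftrightarrow (\mc[i]{v},\mc[j]{v})\in\mc{E}$, and read off the result. The paper is terser and leaves the no-cancellation argument implicit, whereas you spell out the nonnegativity step; that extra care is warranted but does not change the underlying approach.
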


\begin{proof}
Since $ s_{ij} \neq 0 ~\Longleftrightarrow~ (\mc[i]{v}, \mc[j]{v}) \in \mc{E} $, we have
\begin{equation*}
    f_{ij} = \sum_{k=1}^n \frac{s_{ik} \ts s_{jk} \ts \mu(\mc[j]{v})}{\nu(\mc[k]{v})} \neq 0 ~\Longleftrightarrow~ \exists k: (\mc[i]{v}, \mc[k]{v}) \in \mc{E} \land (\mc[j]{v}, \mc[k]{v}) \in \mc{E}
\end{equation*}
and
\begin{equation*}
    b_{ij} = \sum_{k=1}^n \frac{s_{ki} \ts s_{kj} \ts \mu(\mc[k]{v})}{\nu(\mc[i]{v})} \neq 0 ~\Longleftrightarrow~ \exists k: (\mc[k]{v}, \mc[i]{v}) \in \mc{E} \land (\mc[k]{v}, \mc[j]{v}) \in \mc{E}. \qedhere
\end{equation*}
\end{proof}

\begin{figure}
    \centering
    \begin{minipage}[t]{0.25\textwidth}
        \centering
        \subfiguretitle{(a)}
        \vspace*{1ex}
        \resizebox{0.6\textwidth}{!}{%
        \begin{tikzpicture}[
                >= stealth, 
                semithick 
            ]
            \tikzstyle{every state}=[
                draw = black,
                thick,
                fill = white,
                inner sep=0pt,
                text width=6mm,
                align=center,
                scale=0.6
            ]

            \node[state] (vk) {k};
            \node[state] (vi) [above left=0.55cm and 0.55cm of vk] {i};
            \node[state] (vj) [above right=0.55cm and 0.55cm of vk] {j};

            \path[->] (vi) edge node {} (vk);
            \path[->] (vj) edge node {} (vk);
        \end{tikzpicture}}
    \end{minipage}
    \begin{minipage}[t]{0.25\textwidth}
        \centering
        \subfiguretitle{(b)}
        \vspace*{1ex}
        \resizebox{0.6\textwidth}{!}{%
        \begin{tikzpicture}[
                >= stealth, 
                semithick 
            ]
            \tikzstyle{every state}=[
                draw = black,
                thick,
                fill = white,
                inner sep=0pt,
                text width=6mm,
                align=center,
                scale=0.6
            ]

            \node[state] (vk) {k};
            \node[state] (vi) [above left=0.55cm and 0.55cm of vk] {i};
            \node[state] (vj) [above right=0.55cm and 0.55cm of vk] {j};

            \path[<-] (vi) edge node {} (vk);
            \path[<-] (vj) edge node {} (vk);
        \end{tikzpicture}}
    \end{minipage}
    \caption{(a) Forward--backward edge between $ \mc[i]{v} $ and $ \mc[j]{v} $. (b) Backward--forward edge between $ \mc[i]{v} $ and $ \mc[j]{v} $.}
    \label{fig:Stucture of Laplacians}
\end{figure}
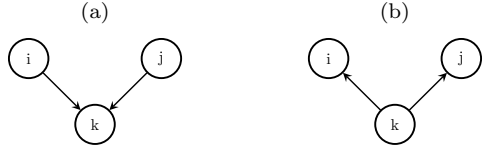

The difference between the two matrix representations of the operators is illustrated in Figure~\ref{fig:Stucture of Laplacians}. Note that the probability of each forward--backward random walk via $ \mc[k]{v} $ is divided by $ \nu(\mc[k]{v}) $, which represents the sum of the probabilities to go from any vertex to $ \mc[k]{v} $. This is intuitively clear since the more edges enter $ \mc[k]{v} $, the more options the random walker has to go backward. On the other hand, the more forward--backward random walks from $ \mc[i]{v} $ to $ \mc[j]{v} $ exist, the larger the weight~$ f_{ij} $.

\begin{remark} \label{rem:DDBS and Hermitian clustering}
This is strongly related to the bibliometric graph clustering approach described in \cite{SaPa11}. Here, the bibliographic coupling matrix $ A A^\top $ and the co-citation matrix $ A^\top A $ represent the common out-links and in-links between a pair of nodes, respectively. Using the sum of these two matrices gives a symmetrization of the adjacency matrix that accounts for both of these types of common links. As in the forward--backward approach, the weights can be divided by the degrees of the nodes in order to give an effective measure of similarity for nodes that accounts for the number of possible routes or links into and out of the node. Alternatively, in \cite{CLSZ20} a complex-valued Hermitian matrix $ H = i(A-A^\top) $ is constructed so that $ H^2 = A A^\top + A^\top A - A^2 - (A^\top)^2 $. This can then also be viewed as a symmetrization scheme that utilizes the bibliographic coupling matrix $ A A^\top $ and the co-citation matrix $ A^\top A $.
\end{remark}

\begin{example}
Let us consider the graph depicted in Figure~\ref{fig:Forward-backward structure}(a), which we already analyzed in Example~\ref{ex:Coherent set illustration}, and compute the associated matrices $ F $ and $ B $. The resulting graph structures are shown in Figures~\ref{fig:Forward-backward structure}(b) and \ref{fig:Forward-backward structure}(c). The graph corresponding to the matrix $ F $, for instance, now contains an edge connecting $ \mc[4]{v} $ and $ \mc[8]{v} $ since a random walker could move forward from $ \mc[4]{v} $ to $ \mc[5]{v} $ and backward to $ \mc[8]{v} $ or the other way around. The probability of this forward--backward random walk, however, is low due to the small edge weight of $ (\mc[4]{v}, \mc[5]{v}) $. Note that without the self-loops all vertices except $ \mc[4]{v} $, $ \mc[8]{v} $, and $ \mc[12]{v} $ would become disconnected. Every forward--backward random walk starting in one of these vertices would then end up in the initial position (since these vertices have only one outgoing edge) and thus form clusters of size one. This illustrates that adding self-loops can be regarded as a form of regularization. \exampleSymbol

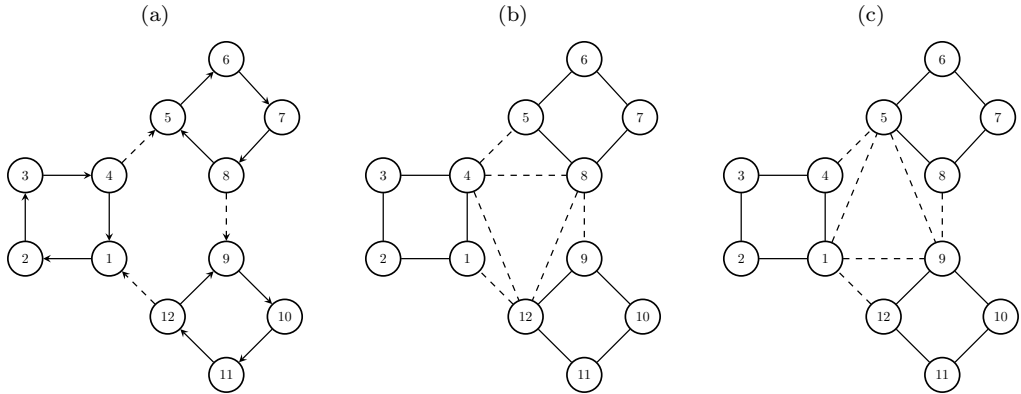
\begin{figure}
    \centering
    \begin{minipage}[t]{0.3\textwidth}
        \centering
        \subfiguretitle{(a)}
        \vspace*{1ex}
        \resizebox{0.85\textwidth}{!}{%
        \begin{tikzpicture}[
                >= stealth, 
                semithick 
            ]
            \tikzstyle{every state}=[
                draw = black,
                thick,
                fill = white,
                inner sep=0pt,
                text width=6mm,
                align=center,
                scale=0.6
            ]

            \node[state] (v3) {3};
            \node[state] (v4) [right=0.8cm of v3] {4};
            \node[state] (v1) [below=0.8cm of v4] {1};
            \node[state] (v2) [below=0.8cm of v3] {2};

            \node[state] (v8) [right=1.35cm of v4] {8};
            \node[state] (v5) [above left=0.55cm and 0.55cm of v8] {5};
            \node[state] (v6) [above=1.35cm of v8] {6};
            \node[state] (v7) [right=1.3cm of v5] {7};

            \node[state] (v9) [right=1.35cm of v1] {9};
            \node[state] (v10) [below right=0.55cm and 0.55cm of v9] {10};
            \node[state] (v11) [below=1.35cm of v9] {11};
            \node[state] (v12) [left=1.35cm of v10] {12};

            \path[->] (v1) edge node {} (v2);
            \path[->] (v2) edge node {} (v3);
            \path[->] (v3) edge node {} (v4);
            \path[->] (v4) edge node {} (v1);

            \path[->] (v5.45) edge node {} (v6.225);
            \path[->] (v6.-45) edge node {} (v7.135);
            \path[->] (v7.-135) edge node {} (v8.45);
            \path[->] (v8.135) edge node {} (v5.-45);

            \path[->] (v9.-45) edge node {} (v10.135);
            \path[->] (v10.-135) edge node {} (v11.45);
            \path[->] (v11.135) edge node {} (v12.-45);
            \path[->] (v12.45) edge node {} (v9.225);

            \path[->, dashed] (v4) edge node [dotted] {} (v5);
            \path[->, dashed] (v8) edge node [dotted] {} (v9);
            \path[->, dashed] (v12) edge node [dotted] {} (v1);
        \end{tikzpicture}}
    \end{minipage}
    \begin{minipage}[t]{0.3\textwidth}
        \centering
        \subfiguretitle{(b)}
        \vspace*{1ex}
        \resizebox{0.85\textwidth}{!}{%
        \begin{tikzpicture}[
                >= stealth, 
                semithick 
            ]
            \tikzstyle{every state}=[
                draw = black,
                thick,
                fill = white,
                inner sep=0pt,
                text width=6mm,
                align=center,
                scale=0.6
            ]

            \node[state] (v3) {3};
            \node[state] (v4) [right=0.8cm of v3] {4};
            \node[state] (v1) [below=0.8cm of v4] {1};
            \node[state] (v2) [below=0.8cm of v3] {2};

            \node[state] (v8) [right=1.35cm of v4] {8};
            \node[state] (v5) [above left=0.55cm and 0.55cm of v8] {5};
            \node[state] (v6) [above=1.35cm of v8] {6};
            \node[state] (v7) [right=1.3cm of v5] {7};

            \node[state] (v9) [right=1.35cm of v1] {9};
            \node[state] (v10) [below right=0.55cm and 0.55cm of v9] {10};
            \node[state] (v11) [below=1.35cm of v9] {11};
            \node[state] (v12) [left=1.35cm of v10] {12};

            \path[-] (v1) edge node {} (v2);
            \path[-] (v2) edge node {} (v3);
            \path[-] (v3) edge node {} (v4);
            \path[-] (v4) edge node {} (v1);

            \path[-] (v5.45) edge node {} (v6.225);
            \path[-] (v6.-45) edge node {} (v7.135);
            \path[-] (v7.-135) edge node {} (v8.45);
            \path[-] (v8.135) edge node {} (v5.-45);

            \path[-] (v9.-45) edge node {} (v10.135);
            \path[-] (v10.-135) edge node {} (v11.45);
            \path[-] (v11.135) edge node {} (v12.-45);
            \path[-] (v12.45) edge node {} (v9.225);

            \path[-, dashed] (v4) edge node [dotted] {} (v5);
            \path[-, dashed] (v8) edge node [dotted] {} (v9);
            \path[-, dashed] (v12) edge node [dotted] {} (v1);

            \path[-, dashed] (v4) edge node [dotted] {} (v8);
            \path[-, dashed] (v4) edge node [dotted] {} (v12);
            \path[-, dashed] (v8) edge node [dotted] {} (v12);
        \end{tikzpicture}}
    \end{minipage}
    \begin{minipage}[t]{0.3\textwidth}
        \centering
        \subfiguretitle{(c)}
        \vspace*{1ex}
        \resizebox{0.85\textwidth}{!}{%
        \begin{tikzpicture}[
                >= stealth, 
                semithick 
            ]
            \tikzstyle{every state}=[
                draw = black,
                thick,
                fill = white,
                inner sep=0pt,
                text width=6mm,
                align=center,
                scale=0.6
            ]

            \node[state] (v3) {3};
            \node[state] (v4) [right=0.8cm of v3] {4};
            \node[state] (v1) [below=0.8cm of v4] {1};
            \node[state] (v2) [below=0.8cm of v3] {2};

            \node[state] (v8) [right=1.35cm of v4] {8};
            \node[state] (v5) [above left=0.55cm and 0.55cm of v8] {5};
            \node[state] (v6) [above=1.35cm of v8] {6};
            \node[state] (v7) [right=1.3cm of v5] {7};

            \node[state] (v9) [right=1.35cm of v1] {9};
            \node[state] (v10) [below right=0.55cm and 0.55cm of v9] {10};
            \node[state] (v11) [below=1.35cm of v9] {11};
            \node[state] (v12) [left=1.35cm of v10] {12};

            \path[-] (v1) edge node {} (v2);
            \path[-] (v2) edge node {} (v3);
            \path[-] (v3) edge node {} (v4);
            \path[-] (v4) edge node {} (v1);

            \path[-] (v5.45) edge node {} (v6.225);
            \path[-] (v6.-45) edge node {} (v7.135);
            \path[-] (v7.-135) edge node {} (v8.45);
            \path[-] (v8.135) edge node {} (v5.-45);

            \path[-] (v9.-45) edge node {} (v10.135);
            \path[-] (v10.-135) edge node {} (v11.45);
            \path[-] (v11.135) edge node {} (v12.-45);
            \path[-] (v12.45) edge node {} (v9.225);

            \path[-, dashed] (v4) edge node [dotted] {} (v5);
            \path[-, dashed] (v8) edge node [dotted] {} (v9);
            \path[-, dashed] (v12) edge node [dotted] {} (v1);

            \path[-, dashed] (v1) edge node [dotted] {} (v5);
            \path[-, dashed] (v1) edge node [dotted] {} (v9);
            \path[-, dashed] (v5) edge node [dotted] {} (v9);
        \end{tikzpicture}}
    \end{minipage}
    \caption{(a) Directed graph with three clusters. Self-loops are omitted for the sake of clarity. (b) Graph structure of the matrix~$ F $ for uniform $ \mu $. (c) Graph structure of the matrix~$ B $ for uniform~$ \nu $. The dashed edges have again a low weight. Clustering the dominant eigenvectors of the matrices $ F $ or $ B $ results in three coherent sets, see also Example~\ref{ex:Coherent set illustration}.}
    \label{fig:Forward-backward structure}
\end{figure}

\end{example}

\subsection{Spectral clustering of directed graphs}

Spectral clustering methods for undirected graphs are based on the eigenvectors corresponding to the smallest eigenvalues of an associated graph Laplacian. The \emph{random-walk normalized Laplacian} is defined by
\begin{equation*}
    L_\mathcal{K} = I - D_\mathscr{o}^{-1} A = I - K,
\end{equation*}
where $ K $ is the matrix representation of the Koopman operator. Equivalently, we can also compute the eigenvectors corresponding to the largest eigenvalues of $ K $. This shows that conventional spectral clustering computes the dominant eigenfunctions of the Koopman operator, which contain information about the \emph{metastable sets} of the system. The eigenvalues of the Koopman operator associated with directed graphs, however, are in general complex-valued and standard spectral clustering techniques typically fail, see~\cite{KD22} for a detailed analysis. For directed (and also time-evolving) graphs we thus propose to compute the eigenvectors associated with the largest eigenvalues of the forward--backward or backward--forward operators in order to detect \emph{coherent sets}. The eigenvalues of these operators are real-valued even if the graph is directed, see Appendix~\ref{app:Properties of transfer operators}.

\begin{textalgorithm}[Transfer operator-based spectral clustering algorithm] \label{alg:directed} $ $
\begin{enumerate}[topsep=1ex] \setlength{\itemsep}{0ex}
\item Compute the $ k $ largest eigenvalues $ \lambda_\ell $ and associated eigenvectors $ \boldsymbol{\varphi}_\ell $ of $ F $.
\item Define $ \boldsymbol{\Phi} = [\boldsymbol{\varphi}_1, \dots, \boldsymbol{\varphi}_k] \in \R^{n \times k} $ and let $ r_i $ denote the $ i $th row of $ \boldsymbol{\Phi} $.
\item Cluster the points $ \{ \ts r_i \ts \}_{i=1}^n $ using, e.g., $ k $-means.
\end{enumerate}
\end{textalgorithm}

The number of eigenvalues close to $ 1 $ indicates the number of coherent sets. That is, $ k $ should be chosen in such a way that there is a spectral gap between $ \lambda_k $ and $ \lambda_{k+1} $. In order to illustrate the spectral clustering algorithm, we first consider a simple graph that has a clearly defined cluster structure.

\begin{example} \label{ex:DSBM}

\begin{figure}
    \definecolor{matlab1}{RGB}{0, 114, 189}
    \definecolor{matlab2}{RGB}{217, 83, 25}
    \definecolor{matlab3}{RGB}{237, 177, 32}
    \definecolor{matlab4}{RGB}{126, 47, 142}
    \newcommand{\cdash}[1]{\textcolor{#1}{\rule[0.5ex]{1em}{0.3ex}}}
    \centering
    \begin{minipage}[t]{0.45\textwidth}
        \centering
        \subfiguretitle{(a)}
        \includegraphics[height=0.25\textheight]{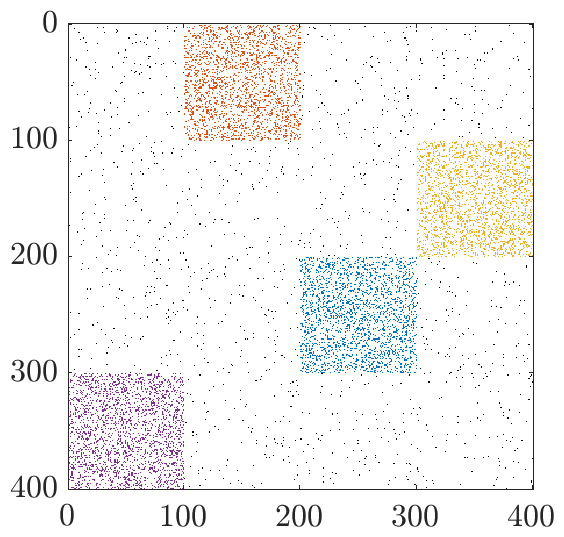}
    \end{minipage}
    \begin{minipage}[t]{0.45\textwidth}
        \centering
        \subfiguretitle{(b)}
        \includegraphics[height=0.25\textheight]{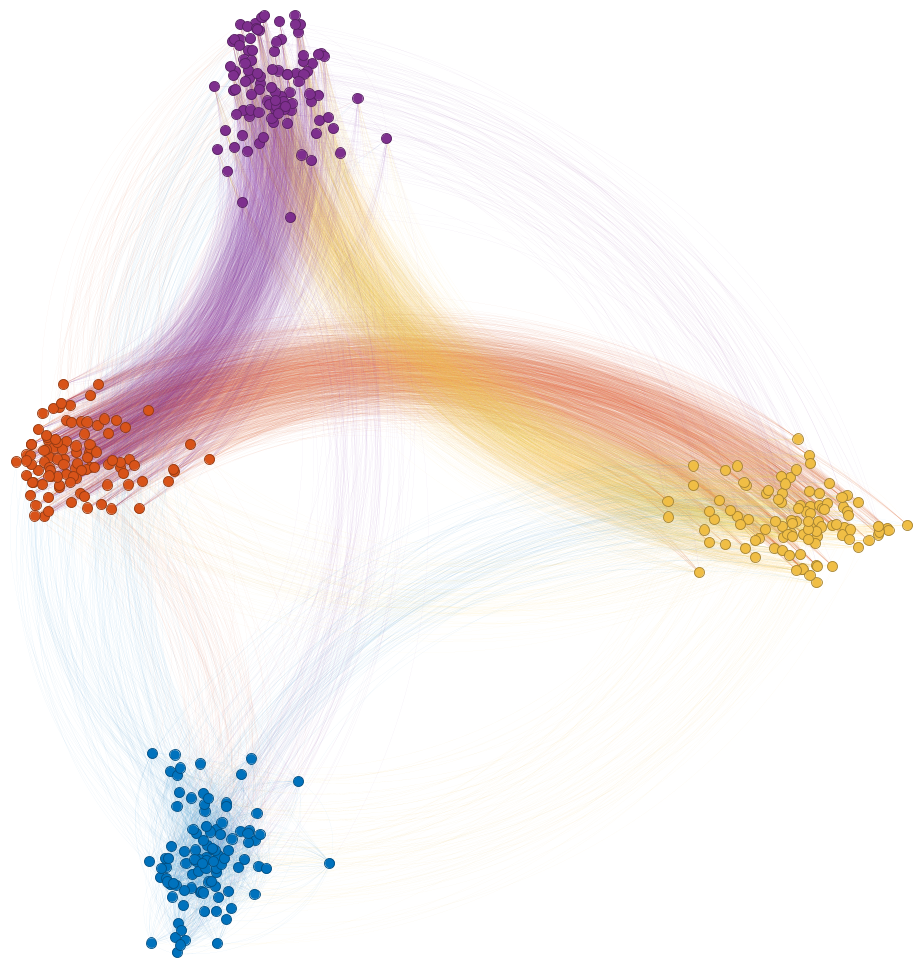}
    \end{minipage} \\[1ex]
    \begin{minipage}[t]{0.45\textwidth}
        \centering
        \subfiguretitle{(c)}
        ~~\includegraphics[height=0.255\textheight]{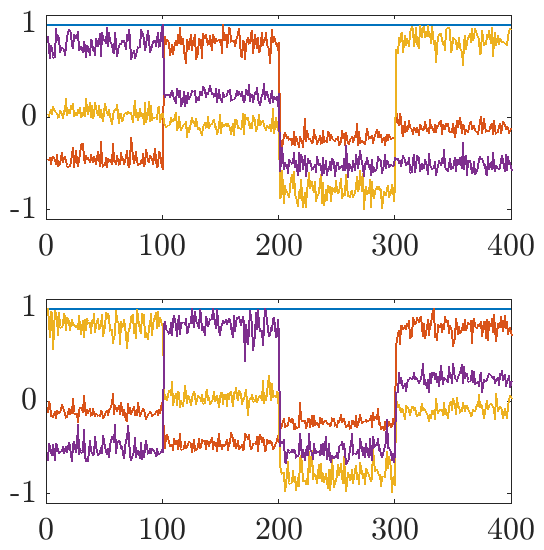}
    \end{minipage}
    \begin{minipage}[t]{0.45\textwidth}
        \centering
        \subfiguretitle{(d)}
        \includegraphics[height=0.255\textheight]{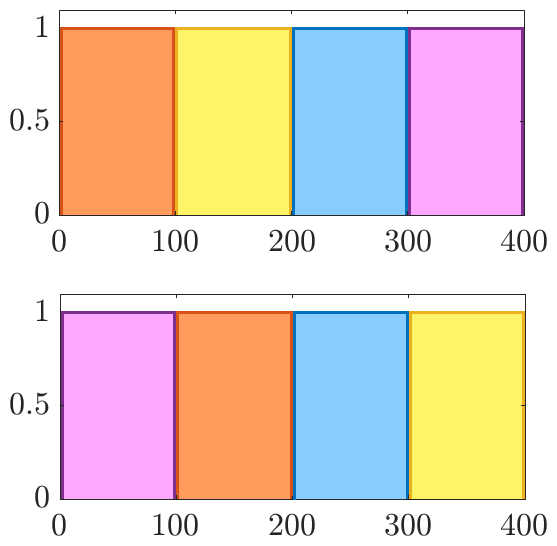}
    \end{minipage}
    \caption{(a) Adjacency matrix of a directed graph comprising four clusters. The blocks are colored according to the cluster numbers. (b) Visualization of the clustered graph using the eigenfunctions $ \varphi_2 $ and $ \varphi_3 $ as coordinates. (c)~Four dominant eigenfunctions $ \varphi_\ell $ (top) and $ \psi_\ell $ (bottom), where \cdash{matlab1} denotes the first, \cdash{matlab2} the second, \cdash{matlab3} the third, and \cdash{matlab4} the fourth eigenfunction. The functions are roughly constant within the clusters. (d) Clusters extracted from the functions $ \varphi_\ell $ (top) and $ \psi_\ell $ (bottom) using $ k $-means. The blue indicator function picks row 3 and column 3 of the block matrix, the red function row 1 and column~2, the yellow function row 2 and column 4, and the purple function row 4 and column 1. That is, each function pair is associated with the block of the adjacency matrix marked in the corresponding color. These blocks all have in common that they contain many nonzero entries (i.e., incoming or outgoing edges). Note that since we compute two functions, i.e., $ \varphi_\ell $ and $ \psi_\ell $, it is now possible to detect dense off-diagonal blocks.}
    \label{fig:DSBM_example}
\end{figure}

We sample a benchmark problem $ \mc{G} \in \mathbf{G}(4, 100, E) $ from the DSBM, with
\begin{equation*}
    E =
    \begin{bmatrix}
        q & p & q & q \\
        q & q & q & p \\
        q & q & p & q \\
        p & q & q & q
    \end{bmatrix},
\end{equation*}
where $ p = 0.8 $ and $ q = 0.1 $. The resulting adjacency matrix is shown in Figure~\ref{fig:DSBM_example}(a). Note that there are two different types of clusters, namely dense blocks on the diagonal, representing groups of vertices that are tightly coupled to other vertices contained in the same cluster, and dense off-diagonal blocks, which have the property that the contained vertices are all tightly coupled (unidirectionally) to vertices in another cluster. For both types of clusters, however, the probability that a random walker going forward and then backward ends up in the same cluster is large compared to the escape probability. Applying Algorithm~\ref{alg:directed} yields the results shown in Figure~\ref{fig:DSBM_example}(c) and (d). The dominant eigenvalues are $ \lambda_1 = 1 $, $ \lambda_2 = 0.72 $, $\lambda_3 = 0.70 $, and $ \lambda_4 = 0.69 $, followed by a spectral gap. The eigenfunctions of the forward--backward operator can also be used for visualizing directed graphs\footnote{Similar visualization techniques for undirected graphs using spectral properties can be found in \cite{Ko05}.} as shown in Figure~\ref{fig:DSBM_example}(b). By defining the position of the vertex $ \mc[i]{v} $ to be $ (\varphi_2(\mc[i]{v}), \varphi_3(\mc[i]{v})) $, we obtain a two-dimensional embedding of the graph. The edges are colored according to the source nodes. The blue vertices form a cluster in the conventional sense since they are strongly connected but only weakly coupled to the other clusters. The remaining sets of vertices have the property that many directed edges are pointing to one of the other clusters but not vice versa. These clusters correspond to non-sparse off-diagonal blocks. \exampleSymbol
\end{example}

\subsection{Spectral clustering of undirected graphs as a special case}

A natural question now is whether conventional spectral clustering methods for undirected graphs can be regarded as a special case of the algorithm derived above.

\begin{lemma} \label{lem:reversible case}
Assume that the system is reversible with respect to $ \pi $. Then, setting $ \mu = \pi $, it holds that $ \mathcal{T} = \mathcal{K} $ and $ \mathcal{F} = \mathcal{B} = \mathcal{K}^2 $.
\end{lemma}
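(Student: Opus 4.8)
The plan is to reduce the whole statement to one observation: reversibility with respect to $ \pi $ makes $ \pi $ the stationary density of the random walk, so that the choice $ \mu = \pi $ automatically forces $ \nu = \pi $ as well. First I would compute $ \nu = \mathcal{P}\mu $ for $ \mu = \pi $. From the definition $ \nu(\mc[i]{v}) = \sum_{j} s_{ji}\,\pi(\mc[j]{v}) $, the detailed balance condition $ \pi(\mc[j]{v})\,s_{ji} = \pi(\mc[i]{v})\,s_{ij} $ rewrites every summand as $ \pi(\mc[i]{v})\,s_{ij} $, and since $ S $ is row-stochastic we get $ \nu(\mc[i]{v}) = \pi(\mc[i]{v})\sum_j s_{ij} = \pi(\mc[i]{v}) $. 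Hence $ \mu = \nu = \pi $, and correspondingly $ D_\mu = D_\nu = D_\pi $; in matrix notation detailed balance is exactly the identity $ D_\pi S = S^\top D_\pi $.

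Second, I would substitute this into the definition of the reweighted Perron--Frobenius operator. For $ u \in \mathbb{U} $,
\[
  \mathcal{T} u(\mc[i]{v})
    = \frac{1}{\pi(\mc[i]{v})}\sum_{j=1}^n s_{ji}\,\pi(\mc[j]{v})\,u(\mc[j]{v})
    = \frac{1}{\pi(\mc[i]{v})}\sum_{j=1}^n \pi(\mc[i]{v})\,s_{ij}\,u(\mc[j]{v})
    = \sum_{j=1}^n s_{ij}\,u(\mc[j]{v})
    = \mathcal{K} u(\mc[i]{v}),
\]
using detailed balance in the middle step; equivalently, on the matrix level $ T = D_\nu^{-1} S^\top D_\mu = D_\pi^{-1} S^\top D_\pi = D_\pi^{-1}(D_\pi S) = S = K $. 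This gives $ \mathcal{T} = \mathcal{K} $.

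Finally, since $ \mathcal{F} = \mathcal{K}\mathcal{T} $ and $ \mathcal{B} = \mathcal{T}\mathcal{K} $ by definition, substituting $ \mathcal{T} = \mathcal{K} $ yields $ \mathcal{F} = \mathcal{B} = \mathcal{K}^2 $ at once (and likewise $ F = KT = K^2 = TK = B $ for the matrix representations). I do not expect any real obstacle here; the only point that must be handled with care is the stationarity step — one has to recognize that it is precisely reversibility together with row-stochasticity of $ S $ that guarantees $ \mathcal{P}\pi = \pi $, which is what makes the reweighting by $ \nu $ in $ \mathcal{T} $ trivial and lets $ \mathcal{T} $ collapse onto $ \mathcal{K} $. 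Everything after that is a one-line substitution.
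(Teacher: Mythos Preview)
Your proposal is correct and follows essentially the same route as the paper: rewrite detailed balance as $D_\pi S = S^\top D_\pi$, observe that $\mu=\pi$ forces $\nu=\pi$, compute $T = D_\pi^{-1} S^\top D_\pi = S = K$, and then read off $\mathcal{F}=\mathcal{B}=\mathcal{K}^2$ from the definitions. The only difference is that you spell out the stationarity step $\mathcal{P}\pi=\pi$ explicitly, whereas the paper just asserts it (and in fact proves it immediately \emph{after} the lemma).
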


\begin{proof}
First, we rewrite the detailed balance condition as $ D_\pi \ts S = S^\top D_\pi $. Since $ \mu = \pi $ implies $ \nu = \pi $, we obtain $ T = D_\pi^{-1} \ts S^\top D_\pi = S = K $. The second part follows immediately from the definitions of the operators $ \mathcal{F} $ and $ \mathcal{B} $.
\end{proof}

If $ \pi $ satisfies the detailed balance condition, then it is also automatically an invariant density. This can be seen as follows:
\begin{equation*}
    \mathcal{P} \boldsymbol{\pi} = S^\top \ts \boldsymbol{\pi} = S^\top D_\pi \ts \mathds{1} = D_\pi \ts S \ts \mathds{1} = D_\pi \ts \mathds{1} = \boldsymbol{\pi}.
\end{equation*}

\begin{lemma} \label{lem:equivalence for undirected graphs}
Let $ \mc{G} $ now be an undirected graph with unique invariant density $ \pi $. Assume that $ K $ has only distinct positive eigenvalues. Then conventional spectral clustering and Algorithm~\ref{alg:directed} with $ \mu = \pi $ compute the same clusters.
\end{lemma}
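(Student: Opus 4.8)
The plan is to reduce the statement to a direct comparison of eigenvectors. By Lemma~\ref{lem:reversible case}, with $\mu=\pi$ we have $\mathcal{F}=\mathcal{K}^2$, so the matrix representation is $F = K^2$. Since $K = D_\pi^{-1}A$ (an undirected graph is reversible with respect to the degree distribution $\pi$, with $\pi(\mc[i]{v}) \propto \mathscr{o}(\mc[i]{v})$), the detailed balance identity $D_\pi S = S^\top D_\pi$ shows $K$ is self-adjoint with respect to $\innerprod{\cdot}{\cdot}_\pi$, hence diagonalizable with real eigenvalues and a $\pi$-orthogonal eigenbasis. Concretely, $D_\pi^{1/2} K D_\pi^{-1/2} = D_\pi^{-1/2} A D_\pi^{-1/2}$ is a symmetric matrix, so $K$ and this symmetrized matrix share eigenvalues and their eigenvectors are related by the fixed invertible diagonal transform $D_\pi^{1/2}$.

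Next I would observe that $F = K^2$ has exactly the same eigenvectors as $K$, with eigenvalues $\lambda_\ell(F) = \lambda_\ell(K)^2$. Under the hypothesis that $K$ has distinct positive eigenvalues $1 = \theta_1 > \theta_2 > \dots > \theta_n > 0$, squaring is strictly monotone on $(0,1]$, so the ordering of eigenvalues is preserved: the $k$ largest eigenvalues of $F$ correspond to the same eigenvectors $\boldsymbol{\varphi}_1,\dots,\boldsymbol{\varphi}_k$ as the $k$ largest eigenvalues of $K$. Therefore Algorithm~\ref{alg:directed} with $\mu=\pi$ forms the embedding matrix $\boldsymbol{\Phi} = [\boldsymbol{\varphi}_1,\dots,\boldsymbol{\varphi}_k]$ from the top eigenvectors of $K$, and then runs $k$-means on its rows.

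Then I would recall, following the discussion preceding the algorithm and the standard reference~\cite{Luxburg07}, that conventional random-walk spectral clustering computes the eigenvectors of the smallest eigenvalues of $L_\mathcal{K} = I - K$, equivalently the eigenvectors of the \emph{largest} eigenvalues of $K$, assembles them into a matrix, and clusters the rows with $k$-means. Since $F$ and $K$ furnish the same eigenvectors in the same order, the embedding coordinates fed into $k$-means are identical (up to, at worst, the usual harmless ambiguities of eigenvectors, namely sign/scaling, which affect neither the clustering objective nor its output in the standard formulation). Hence the two algorithms compute the same clusters.

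The main obstacle is not any deep computation but pinning down precisely what ``the same clusters'' and ``conventional spectral clustering'' mean so the equivalence is airtight: one must fix that both methods use the same normalized Laplacian $L_\mathcal{K}$ (random-walk normalization), note that the assumption ``$K$ has only distinct positive eigenvalues'' is exactly what makes the top-$k$ eigenspace one-dimensional in each coordinate and stable under squaring, and argue that $k$-means is invariant under the permissible eigenvector ambiguities. Once these conventions are fixed, the proof is essentially the two-line chain $F = K^2 \Rightarrow$ same eigenvectors $\Rightarrow$ same embedding $\Rightarrow$ same $k$-means output.
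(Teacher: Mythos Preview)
Your proposal is correct and follows essentially the same approach as the paper: invoke Lemma~\ref{lem:reversible case} to obtain $F=K^2$, note that the eigenvectors coincide with eigenvalues squared, and use the assumption of distinct positive eigenvalues to conclude that the ordering (and hence the $k$-means input) is unchanged. Your write-up is more detailed than the paper's two-line proof (you spell out self-adjointness, the monotonicity of squaring, and the eigenvector sign/scale ambiguities), but the core argument is identical; one minor slip is the parenthetical ``$K=D_\pi^{-1}A$'', which should read $K=D_{\mathscr{o}}^{-1}A$ (since $D_\pi$ is only proportional to $D_{\mathscr{o}}$), though this does not affect your reasoning.
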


\begin{proof}
Let $ \kappa_\ell $ be the eigenvalues of $ K $ and $ \lambda_\ell $ the eigenvalues of $ F $. Using Lemma~\ref{lem:reversible case}, we know that $ \lambda_\ell = \kappa_\ell^2 $ and the eigenvectors are identical. Since all eigenvalues of $ K $ are by assumption positive and distinct, the ordering of the eigenvalues and eigenvectors remains unchanged.
\end{proof}

Negative eigenvalues of $ K $ might affect the ordering of the eigenvectors and thus the spectral clustering results. However, positive eigenvalues can be enforced by turning the transition probability matrix into a lazy Markov chain \cite{LP17}.

\section{Analysis and approximation of transfer operators}
\label{sec:Analysis and approximation}

In this section, we will derive reduced representations of transfer operators, show how they can be estimated from random walk data, and present alternative interpretations of the proposed spectral clustering approach.

\subsection{Galerkin approximation}

Given an operator $ \mathcal{L} \colon \mathbb{U} \to \mathbb{U} $ with matrix representation $ L $, our goal now is to compute its Galerkin projection $ \mathcal{L}_r $ onto the $ r $-dimensional subspace $ \mathbb{U}_r \subset \mathbb{U} $ with basis $ \{ \phi_i \}_{i=1}^r $. Here, $ r $ could potentially be much smaller than $ n $. The matrix representation $ L_r \in \R^{r \times r} $ of $ \mathcal{L}_r $ is given by $ L_r = \big(G^{(0)}\big)^{-1} G^{(1)} $, with entries
\begin{equation*}
    g_{ij}^{(0)} = \innerprod{\phi_i}{\phi_j}_\mu
    \quad \text{and} \quad
    g_{ij}^{(1)} = \innerprod{\phi_i}{\mathcal{L} \phi_j}_\mu.
\end{equation*}
By defining the vector-valued function $ \phi(\mc[i]{v}) = [\phi_1(\mc[i]{v}), \dots, \phi_r(\mc[i]{v})]^\top \in \R^r $ and
\begin{equation*}
    \Phi_{\mc{V}} =
    \begin{bmatrix}
        \phi(\mc[1]{v}), \dots, \phi(\mc[n]{v})
    \end{bmatrix}  \in \R^{r \times n},
\end{equation*}
the matrices $ G^{(0)} $ and $ G^{(1)} $ can be compactly written as
\begin{align*}
    G^{(0)} &= \sum_{i=1}^n \mu(\mc[i]{v}) \ts \phi(\mc[i]{v}) \ts \phi(\mc[i]{v})^\top = \Phi_{\mc{V}} D_\mu \Phi_{\mc{V}}^\top
\intertext{and}
    G^{(1)} &= \sum_{i=1}^n \sum_{j=1}^n \mu(\mc[i]{v}) \ts l_{ij} \ts \phi(\mc[i]{v}) \ts \phi(\mc[j]{v})^\top = \Phi_{\mc{V}} D_\mu \ts L \ts \Phi_{\mc{V}}^\top.
\end{align*}
Any function $ f \in \mathbb{U}_r $ is given by
\begin{equation*}
    f(\mc[i]{v}) = \sum_{j=1}^r c_j \ts \phi_j(\mc[i]{v}) = c^\top \phi(\mc[i]{v}),
\end{equation*}
where $ c = [c_1, \dots, c_r]^\top \in \R^r $, such that
\begin{equation*}
    \mathcal{L}_r f(\mc[i]{v}) = (L_r \ts c)^\top \phi(\mc[i]{v}).
\end{equation*}
Suppose now $ \xi_\ell $ is an eigenvector of $ L_r $ with associated eigenvalue $ \lambda_\ell $, i.e., $ L_r \ts \xi_\ell = \lambda_\ell \ts \xi_\ell $, then $ \varphi_\ell(\mc[i]{v}) = \xi_\ell^\top \phi(\mc[i]{v}) $ is an eigenfunction of $ \mathcal{L}_r $ since
\begin{equation*}
    \mathcal{L}_r \varphi_\ell(\mc[i]{v}) = (L_r \ts \xi_\ell)^\top \phi(\mc[i]{v}) = \lambda_\ell \ts \xi_\ell^\top \phi(\mc[i]{v}) = \lambda_\ell \ts \varphi_\ell(\mc[i]{v}).
\end{equation*}
Alternatively, assuming the matrix $ G^{(0)} $ is invertible, we can also solve the generalized eigenvalue problem $ G^{(1)} \ts \xi_\ell = \lambda_\ell \ts G^{(0)} \ts \xi_\ell $. This shows that eigenfunctions of $ \mathcal{L} $ can be approximated by eigenfunctions of $ \mathcal{L}_r $.

\begin{example} \label{ex:indicator function basis}
Choosing $ r = n $ and indicator functions for each vertex, i.e.,
\begin{equation*}
    \phi_j(\mc[i]{v}) = \mathds{1}_{\mc[j]{v}}(\mc[i]{v}) =
    \begin{cases}
        1, & i = j, \\
        0, & \text{otherwise},
    \end{cases}
\end{equation*}
we obtain $ \Phi_{\mc{V}} = I $ and hence $ G^{(0)} = D_\mu $ and $ G^{(1)} = D_\mu L $ so that $ L_r = L $. That is, in this case we obtain the full matrix representation of the operator $ \mathcal{L} $. \exampleSymbol
\end{example}

Theoretically, we could choose any set of basis functions. Our goal, however, is to define the basis functions in such a way that the reduced operator $ \mathcal{L}_r $ has essentially the same dominant eigenvalues as the full operator $ \mathcal{L} $ and thus retains the metastability (and cluster structure) of the system.

\begin{remark}
The optimal basis is given by the eigenvectors $ \boldsymbol{\varphi}_\ell $ corresponding to the largest eigenvalues $ \lambda_\ell $ of $ L $ since choosing
\begin{equation*}
    \Phi_{\mc{V}} =
    \begin{bmatrix}
        \boldsymbol{\varphi}_1^\top \; \\
        \vdots \\
        \boldsymbol{\varphi}_r^\top
    \end{bmatrix}
\end{equation*}
results in $ G^{(0)} = \Phi_{\mc{V}} \ts D_\mu \ts \Phi_{\mc{V}}^\top $ and $ G^{(1)} = \Phi_{\mc{V}} \ts D_\mu \ts L \ts \Phi_{\mc{V}}^\top = \Phi_{\mc{V}} \ts D_\mu \ts \Phi_{\mc{V}}^\top \diag(\lambda_1, \dots, \lambda_r) $. That is, $ L_r = \diag(\lambda_1, \dots, \lambda_r) $ and the reduced operator has exactly the same dominant eigenvalues as the full operator. However, if we already knew the eigenvectors of $ L $, then we could immediately compute the clusters by applying, e.g., $ k $-means. Also, the number of metastable sets is in general not known in advance. The question then is whether it is possible to derive near-optimal basis functions without having to compute the eigenvectors in the first place. One possibility would be to construct basis functions based on random walks started in different parts of the graph since the random walkers would with a high probability first explore nodes within the clusters before moving to other clusters. The generation of suitable basis functions, however, is beyond the scope of this paper.
\end{remark}

\subsection{Data-driven approximation}

The operators introduced above can also be estimated from random walk data. Given $ m $ random walkers $ x^{(i)} $ sampled from the distribution~$ \mu $, each random walker takes a single step forward according to the probabilities given by the transition probability matrix $ S $. Let the final positions of the random walkers be denoted by $ y^{(i)} $. Alternatively, instead of considering many random walkers, the data can be extracted from one long random walk $ x^{(1)}, \dots, x^{(m+1)} $ by defining $ y^{(i)} = x^{(i+1)} $, for $ i = 1, \dots, m $. The density $ \mu $ is then determined by the random walk. If the graph admits a unique invariant density $ \pi $, then $ \mu $ will converge to $ \pi $ for $ m \to \infty $.

We now define data matrices $ \Phi_x, \Phi_y \in \R^{r \times m} $ by
\begin{equation*}
    \Phi_x = \begin{bmatrix} \phi(x^{(1)}) & \phi(x^{(2)}) & \dots & \phi(x^{(m)}) \end{bmatrix}
    \quad \text{and} \quad
    \Phi_y = \begin{bmatrix} \phi(y^{(1)}) & \phi(y^{(2)}) & \dots & \phi(y^{(m)}) \end{bmatrix}.
\end{equation*}
Further, let $ \widehat{G}_{xx}, \widehat{G}_{yy}, \widehat{G}_{xy} \in \R^{r \times r} $ be defined by
\begin{alignat*}{4}
    \widehat{G}_{xx} &:= \frac{1}{m} \Phi_x \ts \Phi_x^\top = \frac{1}{m} \sum_{i=1}^m \phi\big(x^{(i)}\big) \ts \phi\big(x^{(i)}\big)^\top &&\underset{\scriptscriptstyle m \rightarrow \infty}{\longrightarrow} \sum_{k=1}^n \mu(\mc[k]{v}) \ts \phi(\mc[k]{v}) \ts \phi(\mc[k]{v})^\top =: G_{xx}, \\
    \widehat{G}_{yy} &:= \frac{1}{m} \Phi_y \ts \Phi_y^\top = \frac{1}{m} \sum_{i=1}^m \phi\big(y^{(i)}\big) \ts \phi\big(y^{(i)}\big)^\top &&\underset{\scriptscriptstyle m \rightarrow \infty}{\longrightarrow} \sum_{k=1}^n \nu(\mc[k]{v}) \ts \phi(\mc[k]{v}) \ts \phi(\mc[k]{v})^\top  =: G_{yy}, \\
    \widehat{G}_{xy} &:= \frac{1}{m} \Phi_x \ts \Phi_y^\top = \frac{1}{m} \sum_{i=1}^m \phi\big(x^{(i)}\big) \ts \phi\big(y^{(i)}\big)^\top &&\underset{\scriptscriptstyle m \rightarrow \infty}{\longrightarrow} \sum_{k=1}^n \sum_{l=1}^n \mu(\mc[k]{v}) s_{kl} \ts \phi(\mc[k]{v}) \ts \phi(\mc[l]{v})^\top  =: G_{xy}.
\end{alignat*}
Thus, the entries of the matrices $ G_{xx} = \Phi_{\mc{V}} \ts D_\mu \ts \Phi_{\mc{V}}^\top $, $ G_{yy} = \Phi_{\mc{V}} \ts D_\nu \ts \Phi_{\mc{V}}^\top $, and $ G_{xy} = \Phi_{\mc{V}} \ts D_\mu \ts S \ts \Phi_{\mc{V}}^\top $ are given by
\begin{align*}
    [g_{xx}]_{\rule{0pt}{1.5ex}ij} = \innerprod{\phi_i}{\phi_j}_\mu, \quad
    [g_{yy}]_{\rule{0pt}{1.5ex}ij} = \innerprod{\phi_i}{\phi_j}_\nu, \quad
    [g_{xy}]_{\rule{0pt}{1.5ex}ij} = \innerprod{\phi_i}{\mathcal{K} \phi_j}_\mu.
\end{align*}

\begin{remark}
Note that the matrices $ G_{xx} $ and $ G_{xy} $ are not the Gram matrices used in kernel-based methods. Here, $ G $ stands for Galerkin. For functions $ f(\mc[i]{v}) = c_x^\top \phi(\mc[i]{v}) $ and $ g(\mc[i]{v}) = c_y^\top \phi(\mc[i]{v}) $, it holds that $ \innerprod{f}{\mathcal{C}_{xx} f} = c_x^\top \ts G_{xx} \ts c_x $ and $ \innerprod{f}{\mathcal{C}_{xy} g} = c_x^\top \ts G_{xy} \ts c_y $. If we choose the basis functions defined in Example~\ref{ex:indicator function basis}, then $ G_{xx} = C_{xx} = D_\mu $ and $ G_{xy} = C_{xy} = D_\mu \ts S $.
\end{remark}

\begin{lemma} \label{lem:Galerkin approximations}
It follows that:
\begin{enumerate}[leftmargin=3.5ex, itemsep=0ex, topsep=0.5ex, label=\roman*)]
\item $ G_{xx}^{-1} \ts G_{xy} $ is a Galerkin approximation of the operator $ \mathcal{K} $ w.r.t.\ $ \innerprod{\cdot}{\cdot}_\mu $,
\item $ G_{yy}^{-1} \ts G_{yx} $ is a Galerkin approximation of the operator $ \mathcal{T} $ w.r.t.\ $ \innerprod{\cdot}{\cdot}_\nu $.
\end{enumerate}
\end{lemma}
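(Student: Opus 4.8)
The plan is to read off both claims directly from the Galerkin construction recalled earlier in this section, in which the reduced operator $\mathcal{L}_r$ of an operator $\mathcal{L}$ with matrix representation $L$, taken on the subspace spanned by $\{\phi_i\}_{i=1}^r$, has matrix $L_r = \big(G^{(0)}\big)^{-1} G^{(1)}$, where $g^{(0)}_{ij} = \innerprod{\phi_i}{\phi_j}$ and $g^{(1)}_{ij} = \innerprod{\phi_i}{\mathcal{L} \phi_j}$ in the chosen weighted inner product, or compactly $G^{(0)} = \Phi_{\mc{V}} D_w \Phi_{\mc{V}}^\top$ and $G^{(1)} = \Phi_{\mc{V}} D_w L \Phi_{\mc{V}}^\top$ when the weight density is $w$. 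So the whole proof amounts to identifying, for each part, the weight $w$ and the operator $\mathcal{L}$, and checking that the two resulting Galerkin matrices are exactly $G_{xx}, G_{xy}$ (resp.\ $G_{yy}, G_{yx}$).

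For part~(i) I would take $w = \mu$ and $\mathcal{L} = \mathcal{K}$, so $L = K = S$. Then $G^{(0)} = \Phi_{\mc{V}} D_\mu \Phi_{\mc{V}}^\top = G_{xx}$ and $G^{(1)} = \Phi_{\mc{V}} D_\mu S \Phi_{\mc{V}}^\top = G_{xy}$, which are precisely the closed forms $G_{xx} = \Phi_{\mc{V}} D_\mu \Phi_{\mc{V}}^\top$ and $G_{xy} = \Phi_{\mc{V}} D_\mu S \Phi_{\mc{V}}^\top$ recorded just before the lemma; equivalently, at the level of entries, $[g_{xx}]_{ij} = \innerprod{\phi_i}{\phi_j}_\mu$ and $[g_{xy}]_{ij} = \innerprod{\phi_i}{\mathcal{K} \phi_j}_\mu$, which is exactly the Galerkin recipe with $\mathcal{L} = \mathcal{K}$. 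Hence $G_{xx}^{-1} G_{xy} = L_r$ is the Galerkin approximation of $\mathcal{K}$ with respect to $\innerprod{\cdot}{\cdot}_\mu$.

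For part~(ii) I would switch the weight to $\nu$ and take $\mathcal{L} = \mathcal{T}$, so $L = T = D_\nu^{-1} S^\top D_\mu$. The mass matrix is $G^{(0)} = \Phi_{\mc{V}} D_\nu \Phi_{\mc{V}}^\top = G_{yy}$ by definition. For $G^{(1)} = \Phi_{\mc{V}} D_\nu T \Phi_{\mc{V}}^\top$, the one simplification that matters is the cancellation $D_\nu T = D_\nu D_\nu^{-1} S^\top D_\mu = S^\top D_\mu$ — which is precisely why $\mathcal{T}$ carries the $1/\nu$ prefactor in its definition — so $G^{(1)} = \Phi_{\mc{V}} S^\top D_\mu \Phi_{\mc{V}}^\top = \big(\Phi_{\mc{V}} D_\mu S \Phi_{\mc{V}}^\top\big)^\top = G_{xy}^\top = G_{yx}$, using that $D_\mu$ is symmetric. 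Therefore $G_{yy}^{-1} G_{yx} = L_r$ is the Galerkin approximation of $\mathcal{T}$ with respect to $\innerprod{\cdot}{\cdot}_\nu$. As a cross-check one can expand $\innerprod{\phi_i}{\mathcal{T} \phi_j}_\nu = \sum_{k} \nu(\mc[k]{v}) \phi_i(\mc[k]{v}) \ts \mathcal{T}\phi_j(\mc[k]{v})$ with the definition of $\mathcal{T}$, observe that the $\nu(\mc[k]{v})$ factors cancel, and recover $\sum_{k} \sum_{l} \mu(\mc[l]{v}) \ts s_{lk} \ts \phi_i(\mc[k]{v}) \ts \phi_j(\mc[l]{v}) = [g_{yx}]_{ij}$.

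There is no genuine obstacle here; the argument is pure bookkeeping. The only thing to be careful about is keeping the two weighted inner products straight ($\mu$ paired with $\mathcal{K}$, $\nu$ paired with $\mathcal{T}$) and spotting the $D_\nu T = S^\top D_\mu$ cancellation that turns $G^{(1)}$ into $G_{yx} = G_{xy}^\top$; once that is in place, both statements fall straight out of the general Galerkin formula.
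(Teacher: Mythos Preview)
Your proof is correct and follows essentially the same route as the paper: part~(i) is identical, and for part~(ii) the paper invokes the adjoint relation $\innerprod{\mathcal{T}\phi_i}{\phi_j}_\nu = \innerprod{\phi_i}{\mathcal{K}\phi_j}_\mu$ from Proposition~\ref{pro:operator properties} to recognize $[g_{yx}]_{ij} = \innerprod{\phi_i}{\mathcal{T}\phi_j}_\nu$, whereas you carry out the equivalent matrix-level cancellation $D_\nu T = S^\top D_\mu$ directly. These are the same computation viewed at two levels of abstraction.
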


\begin{proof}
For the first part, compare $ G_{xx} $ and $ G_{xy} $ with the matrices $ G^{(0)} $ and $ G^{(1)} $ introduced above. Then, using Proposition~\ref{pro:operator properties}, we have
\begin{equation*}
    [g_{xy}]_{\rule{0pt}{1.5ex}ij} = \innerprod{\mathcal{T} \phi_i}{\phi_j}_\nu ~~ \Longrightarrow ~~ [g_{yx}]_{\rule{0pt}{1.5ex}ij} =  \innerprod{\phi_i}{\mathcal{T} \phi_j}_\nu,
\end{equation*}
which proves the second part.
\end{proof}

If $ \mu $ is the uniform distribution, then $ \innerprod{\phi_i}{\mathcal{K} \phi_j} = \innerprod{\mathcal{P} \phi_i}{\phi_j} $, and $ G_{xx}^{-1} \ts G_{yx} $ is an approximation of the operator $ \mathcal{P} $ with respect to the standard inner product $ \innerprod{\cdot}{\cdot} $. As the forward--backward and backward--forward operators are simply compositions of these operators (see also Proposition~\ref{pro:covariance operator representations}), all the transfer operators introduced above can be estimated from data. This, however, implies that we do not directly compute Galerkin approximations of $ \mathcal{F} $ and $ \mathcal{B} $ but represent them as compositions of Galerkin projections, which introduces additional errors.

\begin{example} \label{ex:DSBM data-driven}
For the graph introduced in Example~\ref{ex:DSBM}, we define sets $ \mathbb{A}_1 = \{ \mc[1]{v}, \dots, \mc[100]{v} \} $, $ \mathbb{A}_2 = \{ \mc[101]{v}, \dots, \mc[200]{v}\}$, $ \mathbb{A}_3 = \{ \mc[201]{v}, \dots, \mc[300]{v} \} $, and $ \mathbb{A}_4 = \{ \mc[301]{v}, \dots, \mc[400]{v} \} $ and basis functions $ \phi_j(\mc[i]{v}) = \mathds{1}_{\mathbb{A}_j}(\mc[i]{v}) $, with $ j = 1, \dots, 4 $, where $ \mathds{1}_\mathbb{B} $ denotes the indicator function for the set $ \mathbb{B} $. The representation of the forward--backward operator $ \mathcal{F} $ projected onto the space spanned by the four basis functions is then approximately a diagonal matrix and can be regarded as a coarse-grained counterpart of the full operator, where the off-diagonal entries correspond to transition probabilities between the clusters. The resulting eigenvalues and eigenvectors are good approximations of the dominant eigenvalues and eigenvectors of the full operator as shown in Figure~\ref{fig:DSBM data-driven}. \exampleSymbol

\begin{figure}
    \definecolor{matlab1}{RGB}{0, 114, 189}
    \definecolor{matlab2}{RGB}{217, 83, 25}
    \definecolor{matlab3}{RGB}{237, 177, 32}
    \definecolor{matlab4}{RGB}{126, 47, 142}
    \newcommand{\cdash}[1]{\textcolor{#1}{\rule[0.5ex]{1em}{0.3ex}}}
    \centering
    \begin{minipage}[t]{0.45\textwidth}
        \centering
        \subfiguretitle{(a)}
        \includegraphics[height=0.27\textheight]{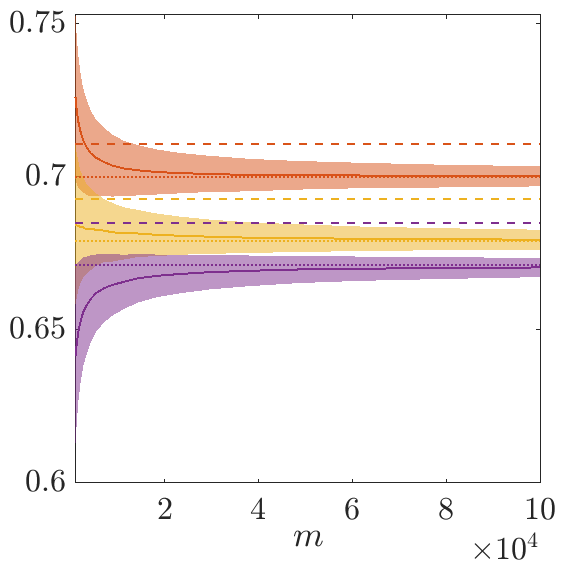}
    \end{minipage}
    \begin{minipage}[t]{0.45\textwidth}
        \centering
        \subfiguretitle{(b)}
        \includegraphics[height=0.25\textheight]{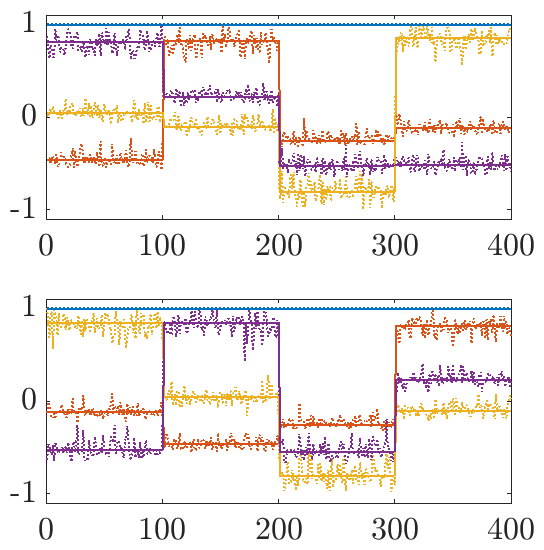}
    \end{minipage}
    \caption{(a) Mean of the eigenvalues estimated from random walk data (averaged over 1000 runs) as a function of $ m $, where \cdash{matlab1} denotes the first (not shown since it is always 1), \cdash{matlab2} the second, \cdash{matlab3} the third, and \cdash{matlab4} the fourth eigenvalue. The shaded area in the corresponding color represents the standard deviation, the dotted line the infinite-data limit, and the dashed line the eigenvalues of the Galerkin projection of the operator. Recall that the data-driven approximation is a composition of two Galerkin approximations and thus slightly underestimates the correct eigenvalues. The eigenvalues of the Galerkin approximation are also marginally smaller than the eigenvalues of the full operator computed in Example~\ref{ex:DSBM}. (b) Dominant eigenfunctions  $ \varphi_\ell $ (top) and $ \psi_\ell$ (bottom) of the reduced matrix (solid lines) and the full matrix (dotted lines). The chosen indicator function basis approximates the correct eigenfunctions well since they are essentially almost constant within the clusters.}
    \label{fig:DSBM data-driven}
\end{figure}

\end{example}

The example illustrates that it is possible to approximate the dominant eigenvalues and eigenvectors by computing eigendecompositions of potentially much smaller matrices, provided that a suitable set of basis functions is chosen and that the graph exhibits metastability.

\begin{remark}
Assuming that only random-walk data is available, but not the underlying graph itself, the data-driven approach can thus also be used for inferring the network structure and the transition probabilities. This will be further investigated in future work.
\end{remark}

\subsection{Optimization problem formulation}

We will now show that it is possible to rewrite the clustering approach for directed graphs as a constrained optimization problem. This gives rise to a different interpretation of clusters. \emph{Canonical correlation analysis} (CCA) \cite{Hotelling36, MRB01, ShCh04} was originally developed to maximize the correlation between random variables---potentially in infinite-dimensional feature spaces if the kernel-based formulation is used. It has been shown in \cite{KHMN19} that CCA, when applied to time-series data, can be used to detect coherent sets. In order to illustrate the relationships with the forward--backward operator $ \mathcal{F} $ (or its matrix representation $ F $), we will briefly outline the derivation of CCA, which can be written as
\begin{equation*}
    \max_{f, g \in \mathbb{U}} \innerprod{f}{\mathcal{C}_{xy} \ts g} \quad \text{s.t.} \quad \innerprod{f}{\mathcal{C}_{xx} \ts f} = \innerprod{g}{\mathcal{C}_{yy} \ts g} = 1.
\end{equation*}
Restricted to the subspace $ \mathbb{U}_r $, defining $ f(\mc[i]{v}) = c_x^\top \phi(\mc[i]{v}) $ and $ g(\mc[i]{v}) = c_y^\top \phi(\mc[i]{v}) $, this yields
\begin{equation} \label{eq:CCA}
    \max_{c_x, c_y \in \R^r} c_x^\top G_{xy} \ts c_y \quad \text{s.t.} \quad c_x^\top G_{xx} \ts c_x = c_y^\top G_{yy} \ts c_y = 1.
\end{equation}
The Lagrangian function corresponding to this optimization problem is
\begin{equation*}
    \mathfrak{L}(c_x, c_y, \kappa_x, \kappa_y) = c_x^\top G_{xy} \ts c_y - \frac{\kappa_x}{2} \big(c_x^\top G_{xx} \ts c_x - 1\big) - \frac{\kappa_y}{2} \big(c_y^\top G_{yy} \ts c_y - 1\big),
\end{equation*}
where $ \kappa_x $ and $ \kappa_y $ are Lagrange multipliers. The derivatives with respect to $ c_x $ and $ c_y $ are
\begin{align*}
    \nabla_{\!c_x} \mathfrak{L}(c_x, c_y, \kappa_x, \kappa_y) = G_{xy} \ts c_y - \kappa_x \ts G_{xx} \ts c_x \stackrel{!}{=} 0, \\
    \nabla_{\!c_y} \mathfrak{L}(c_x, c_y, \kappa_x, \kappa_y) = G_{yx} \ts c_x - \kappa_y \ts G_{yy} \ts c_y \stackrel{!}{=} 0.
\end{align*}
Multiplying the first equation by $ c_x^\top $ and the second by $ c_y^\top $, it follows that $ \kappa_x = \kappa_y =: \kappa_\ell $. We thus obtain the (generalized) eigenvalue problem
\begin{equation*}
    \begin{bmatrix}
        0 & G_{xy} \\
        G_{yx} & 0
    \end{bmatrix}
    \begin{bmatrix}
        c_x \\
        c_y
    \end{bmatrix}
    =
    \kappa_\ell
    \begin{bmatrix}
        G_{xx} & 0 \\
        0 & G_{yy}
    \end{bmatrix}
    \begin{bmatrix}
        c_x \\
        c_y
    \end{bmatrix}
    \quad \text{or} \quad
    \begin{bmatrix}
        0 & G_{xx}^{-1} G_{xy} \\
        G_{yy}^{-1} G_{yx} & 0
    \end{bmatrix}
    \begin{bmatrix}
        c_x \\
        c_y
    \end{bmatrix}
    =
    \kappa_\ell
    \begin{bmatrix}
        c_x \\
        c_y
    \end{bmatrix}.
\end{equation*}
If we now solve one of the equations for $ c_x $ or $ c_y $ and insert the resulting expression into the other, we have
\begin{equation*}
    G_{xx}^{-1} G_{xy} \ts G_{yy}^{-1} G_{yx} \ts c_x = \kappa_\ell^2 \ts c_x
    \quad \text{and} \quad
    G_{yy}^{-1} G_{yx} \ts G_{xx}^{-1} G_{xy} \ts c_y = \kappa_\ell^2 \ts c_y.
\end{equation*}
Since $ G_{xx}^{-1} G_{xy} $ is a Galerkin approximation of $ \mathcal{K} $ and $ G_{yy}^{-1} G_{yx} $ a Galerkin approximation of $ \mathcal{T} $, the first expression can be used to compute eigenfunctions of $ \mathcal{F} $ and the second to compute eigenfunctions of $ \mathcal{B} $, where $ \lambda_\ell = \kappa_\ell^2 $ is the corresponding eigenvalue. This shows that the functions maximizing the correlation are the eigenfunctions of the forward--backward and backward--forward operators. We can again use the dominant eigenfunctions to compute coherent sets.

\begin{example}

\begin{figure}
    \definecolor{matlab1}{RGB}{0, 114, 189}
    \definecolor{matlab2}{RGB}{217, 83, 25}
    \definecolor{matlab3}{RGB}{237, 177, 32}
    \definecolor{matlab4}{RGB}{126, 47, 142}
    \newcommand{\cdash}[1]{\textcolor{#1}{\rule[0.5ex]{1em}{0.3ex}}}
    \centering
    \begin{minipage}[t]{0.345\textwidth}
        \centering
        \subfiguretitle{(a)}
        \includegraphics[width=\textwidth]{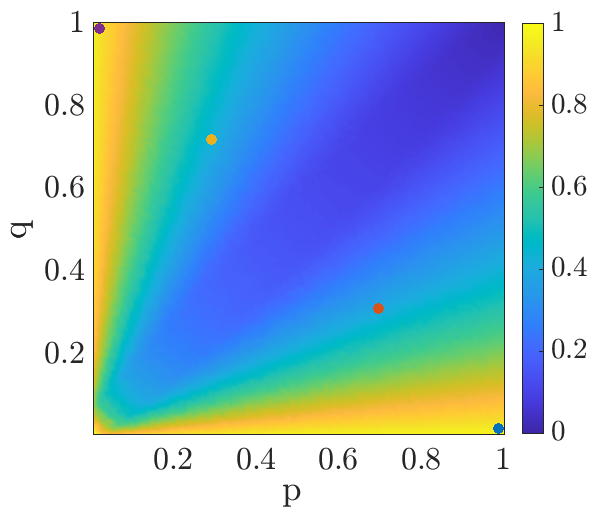}
    \end{minipage}
    \hfill
    \begin{minipage}[t]{0.345\textwidth}
        \centering
        \subfiguretitle{(b)}
        \includegraphics[width=\textwidth]{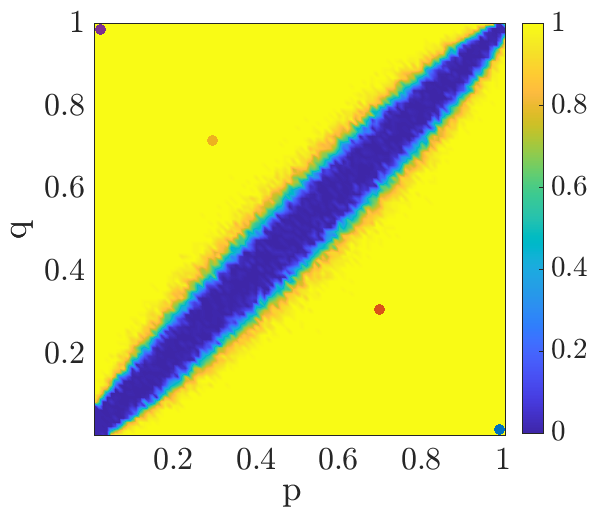}
    \end{minipage}
    \hfill
    \begin{minipage}[t]{0.263\textwidth}
        \centering
        \subfiguretitle{(c)}
        \vspace*{0.6ex}
        \includegraphics[width=\textwidth]{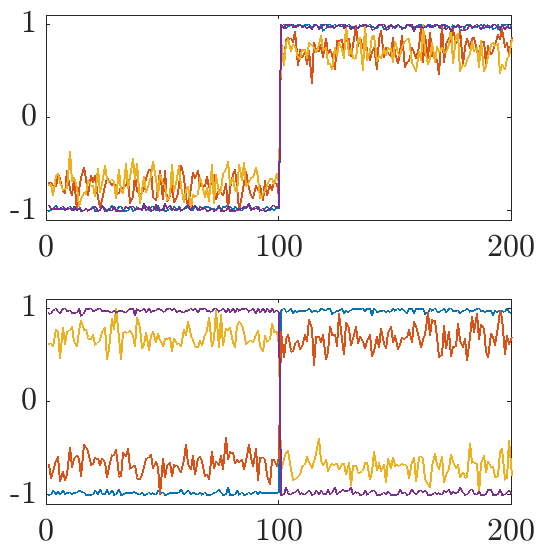}
    \end{minipage}
    \caption{(a) Correlation between $ \varphi_2 $ and $ \psi_2 $. (b) Adjusted Rand index of the obtained clustering. A value of 1 implies that the algorithm identified the correct clusters. (c)~Eigenfunctions $ \varphi_2 $ (top) and $ \psi_2 $ (bottom) for different values of $ p $ and $ q $, where \cdash{matlab1} is $p=0.99$ and $q=0.01$, \cdash{matlab2} $p=0.7$ and $q=0.3$, \cdash{matlab3} $p=0.3$ and $q=0.7$, and \cdash{matlab4} $p=0.01$ and $q=0.99$, corresponding to the dots of the same color in (a) and (b). Although the correlation decreases due to the ``noisier'' eigenvectors if $ p $ approaches $ q $, the spectral clustering algorithm still works for most cases unless $ p \approx q $. Note that the sign of $ \psi_2 $ flips once $ q > p $ (i.e., the off-diagonal blocks are now non-sparse), but the correlation increases again.}
    \label{fig:DSBM_analysis}
\end{figure}

We now generate graphs $ \mc{G} \in \mathbf{G}\big(2, 100, \big[\begin{smallmatrix} p & q \\ q & p \end{smallmatrix} \big]\big) $ with varying $ p $ and $ q $ using the DSBM. There are two dense diagonal blocks if $ p $ is large and $ q $ is small and two dense off-diagonal blocks if $ p $ is small and $ q $ large. In order to analyze the quality of the clustering for different values of $ p $ and $ q $, we plot the second-largest eigenvalue $ \kappa_2 $ (i.e., the  correlation between $ \varphi_2 $ and $ \psi_2 $) and the adjusted Rand index \cite{HA85}. The results are shown in Figure~\ref{fig:DSBM_analysis}. The clustering works for both scenarios, but, as expected, fails when the values of $ p $ and $ q $ are too similar since there are in that case no clearly defined clusters. \exampleSymbol
\end{example}

This demonstrates that the spectral clustering approach for directed graphs works for a wide range of parameter settings. Alternatively, we can rewrite \eqref{eq:CCA} as
\begin{equation} \label{eq:CCA SVD}
    \max_{\substack{\norm{\widetilde{c}_x} = 1 \\ \norm{\widetilde{c}_y} = 1}} \widetilde{c}_x^\top G_{xx}^{-\nicefrac{1}{2}} G_{xy} \ts G_{yy}^{-\nicefrac{1}{2}} \ts \widetilde{c}_y,
\end{equation}
where $ \widetilde{c}_x = G_{xx}^{\nicefrac{1}{2}} \ts c_x $ and $ \widetilde{c}_y = G_{yy}^{\nicefrac{1}{2}} \ts c_y $. The maximum of the cost function \eqref{eq:CCA SVD} is the largest singular value $ \sigma_1 $ of the matrix $ G_{xx}^{-\nicefrac{1}{2}} G_{xy} \ts G_{yy}^{-\nicefrac{1}{2}} $, attained at $ \widetilde{c}_x = u_1 $ and $ \widetilde{c}_y = v_1 $, where $ u_1 $ and $ v_1 $ are the corresponding left and right singular vectors \cite{GVL13}. Thus, setting $ c_x = G_{xx}^{-\nicefrac{1}{2}} u_1 $ and $ c_y = G_{yy}^{-\nicefrac{1}{2}} v_1 $ solves the CCA problem. This can be extended to the subsequent singular values and vectors.

\subsection{Optimization problem formulation for undirected graphs}

In order to write spectral clustering algorithms for undirected graphs as a solution of a constrained optimization problem, set $ \mu = \pi $. We then solve
\begin{equation} \label{eq:CCA symmetric}
    \max_{c \in \R^r} c^\top G_{xy} \ts c \quad \text{s.t.} \quad c^\top G_{xx} \ts c = 1,
\end{equation}
resulting in the Lagrangian function
\begin{equation*}
    \mathfrak{L}(c, \kappa) = c^\top G_{xy} \ts c - \kappa \ts \big(c^\top G_{xx} \ts c - 1\big)
\end{equation*}
and hence
\begin{equation*}
    \nabla_{\!c} \ts \mathfrak{L}(c, \kappa) = 2 \ts G_{xy} \ts c - 2 \ts \kappa \ts G_{xx} \ts c \stackrel{!}{=} 0,
\end{equation*}
where we used that $ G_{xy} $ is symmetric for $ \mu = \pi $ since due to the detailed balance condition $ C_{xy} = D_\pi \ts S = S^\top D_\pi = C_{xy}^\top $ and hence also $ G_{xy} = G_{xy}^\top $. Thus, we obtain the eigenvalue problem $ G_{xx}^{-1} G_{xy} \ts c = \kappa \ts c $. This is a Galerkin approximation of the Koopman operator and the cost function is maximized by the eigenfunction associated with the largest eigenvalue. Subdominant eigenfunctions can again be used for detecting metastable sets.

\begin{remark}
If the matrix $ G_{xy} $ is not symmetric, we can still solve the optimization problem, but would then obtain the eigenvalue problem $ \frac{1}{2} G_{xx}^{-1} (G_{xy} + G_{yx}) \ts c = \kappa \ts c $, which can be regarded as a simple form of symmetrization.
\end{remark}

Defining $ \widetilde{c} = G_{xx}^{\nicefrac{1}{2}} \ts c $, we can also write \eqref{eq:CCA symmetric} as
\begin{equation} \label{eq:CCA eig}
    \max_{\norm{\widetilde{c}} = 1} \widetilde{c}^\top G_{xx}^{-\nicefrac{1}{2}} G_{xy} \ts G_{xx}^{-\nicefrac{1}{2}} \ts \widetilde{c},
\end{equation}
which is maximized by the dominant eigenvector of the matrix $ G_{xx}^{-\nicefrac{1}{2}} G_{xy} \ts G_{xx}^{-\nicefrac{1}{2}} $. We then again obtain the eigenvalue problem $ G_{xx}^{-1} G_{xy} \ts c = \kappa \ts c $.

\subsection{Comparison}

We have shown in Section~\ref{sec:Transer operators} that our transfer operator-based spectral clustering algorithm---obtained by replacing the Koopman operator by the forward--backward operator---can be interpreted as a straightforward generalization of conventional spectral clustering algorithms for undirected graphs. This is also corroborated by the optimization problem formulation. Additionally, comparing the constrained optimization problems \eqref{eq:CCA} and \eqref{eq:CCA symmetric} for directed and undirected graphs, we see that the former is clearly more flexible and powerful since we are allowed to choose two functions, whereas the latter is restricted to one, which makes it impossible to detect dense off-diagonal blocks.

\section{Numerical results}
\label{sec:Numerical results}

We will present numerical results for a set of benchmark problems. Additional examples, including extensions to time-evolving networks, can be found in~\cite{KD22}.

\subsection{Weighted graphs with inhomogeneous degree distributions}

\begin{figure}
    \centering
    \begin{minipage}[t]{0.45\textwidth}
        \centering
        \subfiguretitle{(a)}
        \includegraphics[height=0.25\textheight]{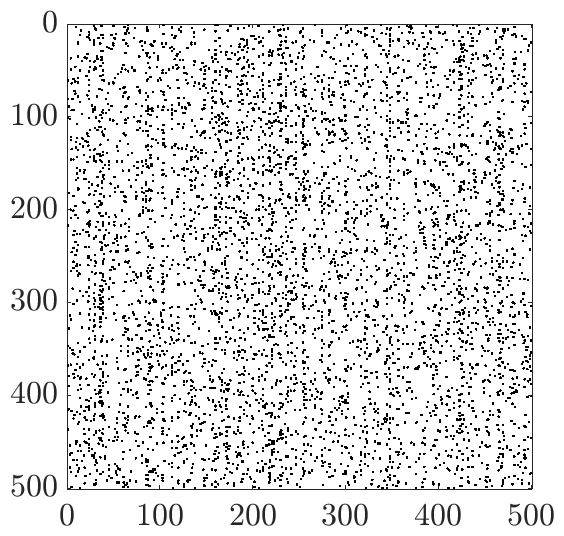}
    \end{minipage}
    \begin{minipage}[t]{0.45\textwidth}
        \centering
        \subfiguretitle{(b)}
        \includegraphics[height=0.25\textheight]{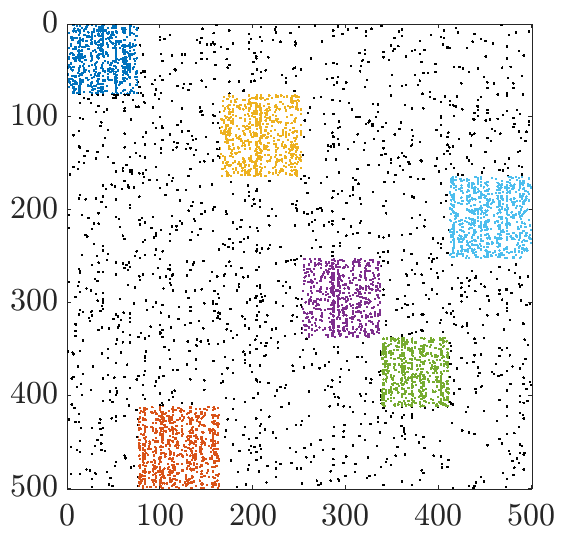}
    \end{minipage}
    \caption{(a) Adjacency matrix of a graph with six clusters. (b) Permuted adjacency matrix, where the vertices are reordered according to the identified clusters.}
    \label{fig:Benchmark graph}
\end{figure}

We generate 300 weighted graphs with heterogeneous node-degree distributions and community sizes as described in \cite{LF09} using their publicly available \href{https://www.santofortunato.net/resources}{C\texttt{++} code} (Package 4). All the graphs comprise 500 vertices and have between four and six clusters. We consider three different test cases: (1) only dense blocks on the diagonal, (2) only off-diagonal dense blocks, and (3)~a mix of both.\!\footnote{In order to generate adjacency matrices with off-diagonal blocks, we permute the block structure manually since the C\texttt{++} code does not support this feature.} A typical adjacency matrix (case 3) is shown in Figure~\ref{fig:Benchmark graph}.

\begin{table}
\caption{Comparison of the degree-discounted bibliometric symmetrization (DDBS), the Hermitian clustering approach (\textsc{Herm-RW}), and Algorithm~\ref{alg:directed} for graphs generated using a generalization of the model proposed in \cite{LF09}. For each method and test case, we compute the average adjusted Rand index (ARI) and the average number of misclassified vertices (NMV) in per cent.}
\label{tab:Benchmark graphs}
\newcolumntype{B}{X}
\scalebox{0.87}{
\renewcommand{\arraystretch}{1.2}
\begin{tabularx}{1.1\textwidth}{|B|C{1.6cm}|C{1.6cm}|C{1.6cm}|C{1.6cm}|C{1.6cm}|C{1.6cm}|C{1.6cm}|C{1.6cm}|C{1.6cm}|}
    \hline
      & \multicolumn{2}{c|}{DDBS \cite{SaPa11}}
      & \multicolumn{2}{c|}{\textsc{Herm-RW} \cite{CLSZ20}}
      & \multicolumn{2}{c|}{Algorithm~\ref{alg:directed}} \\ \hline
     & ARI & NMV & ARI & NMV & ARI & NMV \\ \hline
    diagonal blocks     & 0.954 & 3.938\,\%  & 0.059 & 68.262\,\% & 0.993 & 0.558\,\% \\
    off-diagonal blocks & 0.953 & 4.341\,\% & 0.927 & \phantom{0}6.681\,\% & 0.993 & 0.629\,\% \\
    mixed               & 0.958 & 3.742\,\%  & 0.325 & 49.933\,\% & 0.994 & 0.531\,\% \\ \hline
\end{tabularx}}
\end{table}

We apply the degree-discounted bibliometric symmetrization \cite{SaPa11}, the Hermitian clustering method proposed in \cite{CLSZ20} (both described in Remark~\ref{rem:DDBS and Hermitian clustering}), and Algorithm~\ref{alg:directed} to the three different test cases. We run each algorithm ten times (using the same $ k $-means implementation and settings) for each graph and compute the average adjusted Rand index and the average number of misclassified vertices. The results, listed in Table~\ref{tab:Benchmark graphs}, show that the Hermitian clustering approach works well for graphs with dense off-diagonal blocks, but fails to identify ``conventional'' clusters, i.e., groups of nodes that are strongly coupled by directed edges, since the method was designed to detect only imbalances in the orientation of the edges. Instead of the random-walk normalized matrix $ A_{\text{rw}} = D_\mathscr{o}^{-1} A $ as proposed in~\cite{CLSZ20}, we use the normalized adjacency matrix $ A_{\text{nn}} = D_\mathscr{o}^{-\nicefrac{1}{2}} A D_\mathscr{i}^{-\nicefrac{1}{2}} $, which seems to improve the results. Although the reweighting parameters for the degree-discounted bibliometric symmetrization were determined empirically, the method works for all test cases. The transfer operator-based clustering also works well for all considered graph types and leads to marginally better results. The advantage of the dynamical systems approach is that it offers a rigorous interpretation of clusters in terms of coherent sets (or canonical correlation analysis) and that it can be easily extended to dynamic graphs. For more complicated benchmark problems, it might also be beneficial to tune the probability density $ \mu $ or to use a combination of the operators $ \mathcal{F} $ and $ \mathcal{B} $.

\subsection{32-bit adder}

As a last example, we consider a 32-bit adder. The data set can be found on the  \href{https://math.nist.gov/MatrixMarket/data/misc/hamm/add32.html}{Matrix Market} website. Since the matrix contains negative values, we shift the entries so that all weights are positive. The circuit has a fairly simple structure and the spectral clustering algorithm detects 32 blocks of nearly the same size. Reordering the adjacency matrix based on the cluster numbers, we obtain a block matrix with very few nonzero entries in the off-diagonal blocks as shown in Figure~\ref{fig:32-bit adder}. The bandwidth of the matrix could be minimized by applying generalizations of the  Cuthill--McKee algorithm to the block structure \cite{ReSc06}.

\begin{figure}
    \centering
    \begin{minipage}[t]{0.326\textwidth}
        \centering
        \subfiguretitle{(a)}
        \includegraphics[height=0.19\textheight]{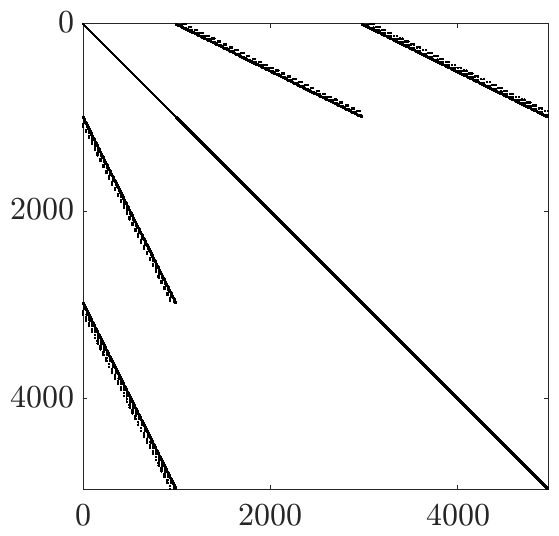}
    \end{minipage}
    \begin{minipage}[t]{0.326\textwidth}
        \centering
        \subfiguretitle{(b)}
        \vspace*{0.6ex}
        \includegraphics[height=0.204\textheight]{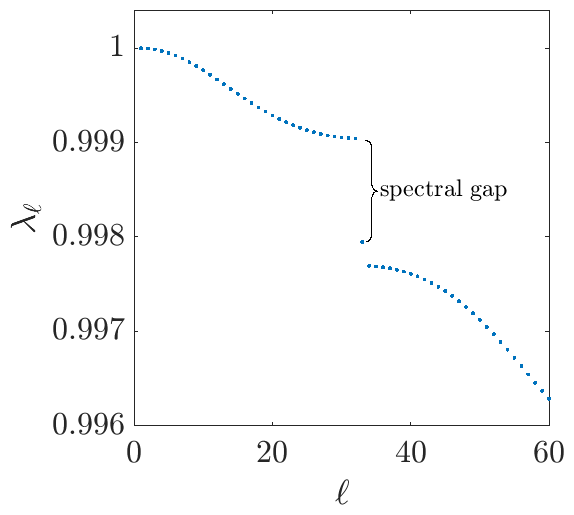}
    \end{minipage}
    \begin{minipage}[t]{0.326\textwidth}
        \centering
        \subfiguretitle{(c)}
        \includegraphics[height=0.19\textheight]{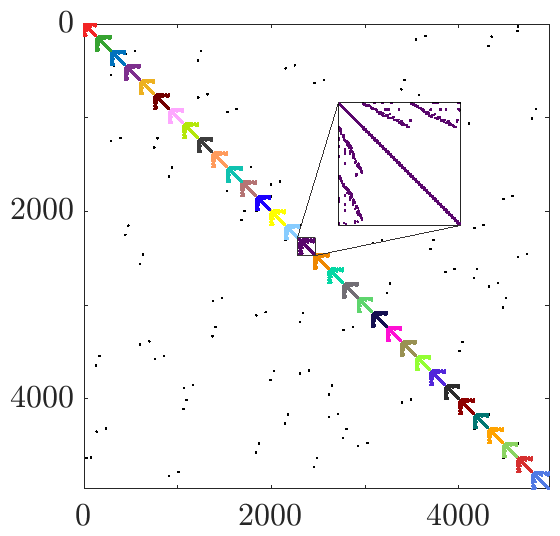}
    \end{minipage}
    \caption{(a)~Structure of the original adjacency matrix of the 32-bit adder. (b)~Dominant eigenvalues of the forward--backward operator. There is a spectral gap between the 32nd and 33rd eigenvalue. We thus set $ k = 32 $. (c). Permuted adjacency matrix, where we again reorder the vertices according to the identified clusters.}
    \label{fig:32-bit adder}
\end{figure}

\section{Conclusion}
\label{sec:Conclusion}

We have defined different types of transfer operators associated with random walks on graphs and shown that the eigenfunctions contain information about metastable or coherent sets. The main advantages of the dynamical systems perspective are that the derived algorithms can be interpreted as generalizations of conventional spectral clustering methods for undirected graphs and that they can also be easily extended to time-evolving graphs as shown in \cite{KD22}. Future work includes a thorough analysis of the proposed coherent set-based spectral clustering approach for dynamic graphs. Of particular interest are graphs comprising clusters whose structure changes in time, e.g., splitting and merging as well as shrinking and growing clusters. First results presented in \cite{KD22} illustrate that it is possible to detect time-dependent clusters in dynamic graphs that are based on discretized dynamical systems with coherent sets. However, finding such clusters in real-world graphs, which may have a more intricate and less homogeneous coupling structure, might require tuning and improving the algorithms. Also the properties of transfer operators associated with time-evolving graphs need to be studied in more detail. An open question is whether it is possible to derive related optimization problems that take each layer of the network into account. Furthermore, it would be interesting to interpret the optimization problems derived above as relaxed versions of discrete counterparts. How is this related to normalized cuts and can these problems be efficiently solved using quantum annealers?

Another research problem is the derivation of suitable basis functions for the Galerkin approximation that would allow us to coarse-grain the system without losing essential information about the metastable or coherent sets. While it is theoretically possible to reduce the size of the eigenvalue problem significantly, computing the projected operator and solving the resulting less sparse eigenvalue problem numerically might not be advantageous in practice.

For the benchmark problems in Section~\ref{sec:Numerical results}, we simply computed the forward--backward operator with respect to the uniform density. However, it might be beneficial to choose different densities $ \mu $ to counterbalance the inhomogeneous degree distributions, similar to the degree-discounted symmetrization proposed in \cite{SaPa11}. Using standard $ k $-means, we assign each vertex to a cluster. However, some vertices might not belong to any cluster or to multiple clusters at the same time. Soft clustering methods might be able to detect overlapping communities or transition regions.

\section*{Data availability}

The spectral clustering code and examples that support the findings presented in this paper are available at \url{https://github.com/sklus/d3s/}.

\section*{Acknowledgments}

M.\ T. was supported by the EPSRC Centre for Doctoral Training in Mathematical Modelling, Analysis and Computation (MAC-MIGS) funded by the UK Engineering and Physical Sciences Research Council (grant EP/S023291/1), Heriot--Watt University and the University of Edinburgh. We would also like to thank Nata\v sa Djurdjevac Conrad for many fruitful discussions and helpful suggestions.

\bibliographystyle{unsrturl}
\bibliography{TOSCA}

\appendix

\section{Properties of transfer operators}
\label{app:Properties of transfer operators}

We are mainly interested in spectral properties and how they are related to inherent time scales and metastable or coherent sets.

\begin{proposition} \label{pro:operator properties}
The operators defined above have the following properties:
\begin{enumerate}[leftmargin=4ex, itemsep=0ex, topsep=0.5ex, label=\roman*), beginpenalty=10000]
\item It holds that $ \innerprod{\mathcal{T} u}{f}_\nu = \innerprod{u}{\mathcal{K} f}_\mu $.
\item The operator $ \mathcal{F} $ is self-adjoint and positive semi-definite w.r.t.\ $ \innerprod{\cdot}{\cdot}_\mu $.
\item The operator $ \mathcal{B} $ is self-adjoint and positive semi-definite w.r.t.\ $ \innerprod{\cdot}{\cdot}_\nu $.
\item $ \mathcal{F} \mathds{1} = \mathds{1} $ and $ \mathcal{B} \mathds{1} = \mathds{1} $, i.e., $ F $ and $ B $ are row-stochastic.
\item The eigenvalues of $ \mathcal{F} $ and $ \mathcal{B} $ are real-valued and contained in the interval $ [0, 1] $.
\end{enumerate}
\end{proposition}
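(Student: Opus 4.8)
The plan is to prove (i) by a short direct computation and then read off (ii)--(v) from it together with the row-stochasticity of $S$ and the definition of $\nu$. For (i), I would expand $\innerprod{\mathcal{T}u}{f}_\nu = \sum_{i}\nu(\mc[i]{v})\,\mathcal{T}u(\mc[i]{v})\,f(\mc[i]{v})$; the weight $\nu(\mc[i]{v})$ cancels the prefactor $1/\nu(\mc[i]{v})$ in the definition of $\mathcal{T}$, and swapping the order of summation turns the inner sum into $\sum_{i}s_{ji}\,f(\mc[i]{v}) = \mathcal{K}f(\mc[j]{v})$, leaving $\sum_{j}\mu(\mc[j]{v})\,u(\mc[j]{v})\,\mathcal{K}f(\mc[j]{v}) = \innerprod{u}{\mathcal{K}f}_\mu$. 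In matrix form this is just $(T\boldsymbol{u})^\top D_\nu\,\boldsymbol{f} = \boldsymbol{u}^\top D_\mu S\,\boldsymbol{f}$, using $T = D_\nu^{-1}S^\top D_\mu$. This is the only genuine calculation in the proposition.

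For (ii) and (iii), I would write $\mathcal{F} = \mathcal{K}\mathcal{T}$ and $\mathcal{B} = \mathcal{T}\mathcal{K}$ and apply (i), in the equivalent symmetric form $\innerprod{\mathcal{K}h}{v}_\mu = \innerprod{h}{\mathcal{T}v}_\nu$, to obtain $\innerprod{\mathcal{F}u}{v}_\mu = \innerprod{\mathcal{T}u}{\mathcal{T}v}_\nu$ and $\innerprod{\mathcal{B}f}{g}_\nu = \innerprod{\mathcal{K}f}{\mathcal{K}g}_\mu$. Both right-hand sides are symmetric in their two arguments, so $\mathcal{F}$ and $\mathcal{B}$ are self-adjoint with respect to $\innerprod{\cdot}{\cdot}_\mu$ and $\innerprod{\cdot}{\cdot}_\nu$ respectively (these are honest inner products because $\mu,\nu > 0$), and setting the two arguments equal yields $\innerprod{\mathcal{F}u}{u}_\mu = \norm{\mathcal{T}u}_\nu^2 \ge 0$ and $\innerprod{\mathcal{B}f}{f}_\nu = \norm{\mathcal{K}f}_\mu^2 \ge 0$, hence positive semi-definiteness. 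For (iv), $\mathcal{K}\mathds{1} = \mathds{1}$ because the rows of $S$ sum to one, and $\mathcal{T}\mathds{1} = \mathds{1}$ because $\sum_j s_{ji}\,\mu(\mc[j]{v}) = \nu(\mc[i]{v})$ by the very definition of $\nu$; composing gives $\mathcal{F}\mathds{1} = \mathcal{B}\mathds{1} = \mathds{1}$, and since $F = S D_\nu^{-1} S^\top D_\mu$ and $B = D_\nu^{-1} S^\top D_\mu S$ are products of entrywise nonnegative matrices, $F$ and $B$ are nonnegative with unit row sums, i.e.\ row-stochastic.

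For (v), real-valuedness of the spectra follows from self-adjointness with respect to the positive definite weighted inner products, and nonnegativity of the eigenvalues from the positive semi-definiteness established in (ii)--(iii). The only assertion that does not fall out formally from (i) is the upper bound $\lambda \le 1$, and I regard that as the crux; I see two ways to close it. One is to invoke that a nonnegative matrix with unit row sums has spectral radius $1$: by Gershgorin's theorem every eigenvalue of $F$ lies in a disk centred at $f_{ii}\in[0,1]$ of radius $1 - f_{ii}$, hence in the closed unit disk, and combining this with the previous two facts confines the eigenvalue to $[0,1]$. The other is to use the Rayleigh-quotient characterisation $\lambda_{\max}(\mathcal{F}) = \max_{u\neq 0}\norm{\mathcal{T}u}_\nu^2/\norm{u}_\mu^2$ together with the contraction estimate $\norm{\mathcal{T}u}_\nu \le \norm{u}_\mu$: writing $\mathcal{T}u(\mc[i]{v})$ as a convex combination of the values $u(\mc[j]{v})$ with weights $s_{ji}\mu(\mc[j]{v})/\nu(\mc[i]{v})$ (which sum to one by the definition of $\nu$), Jensen's inequality gives $\bigl(\mathcal{T}u(\mc[i]{v})\bigr)^2 \le \sum_j \frac{s_{ji}\mu(\mc[j]{v})}{\nu(\mc[i]{v})}\,u(\mc[j]{v})^2$, and multiplying by $\nu(\mc[i]{v})$, summing over $i$, and using the row-stochasticity of $S$ yields $\norm{\mathcal{T}u}_\nu^2 \le \norm{u}_\mu^2$. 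The same arguments apply to $\mathcal{B}$ with the roles of $\mu$ and $\nu$ exchanged, and apart from this bound the proposition is bookkeeping with the definitions.
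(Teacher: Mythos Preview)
Your proposal is correct and follows essentially the same route as the paper. The paper does (i)--(iv) by the same matrix identities you write down (using $T = D_\nu^{-1}S^\top D_\mu$, $S\mathds{1}=\mathds{1}$, and $S^\top\boldsymbol{\mu}=\boldsymbol{\nu}$), and for (v) simply invokes the fact that a row-stochastic matrix has spectral radius~$1$; your Gershgorin argument is one standard way to justify that fact, and your Jensen/contraction alternative is a pleasant self-contained variant. The only stylistic difference is that for (ii)--(iii) you deduce self-adjointness and positive semi-definiteness from the adjoint relation (i) via $\innerprod{\mathcal{F}u}{v}_\mu=\innerprod{\mathcal{T}u}{\mathcal{T}v}_\nu$, whereas the paper redoes the matrix computation directly; your phrasing is slightly more conceptual but amounts to the same calculation.
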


\begin{proof}
Note that the $ \mu $-weighted inner product can be written as $ \innerprod{f}{g}_\mu = \mathbf{f}^\top D_\mu \ts \mathbf{g} $.
\begin{enumerate}[leftmargin=3.5ex, itemsep=0ex, topsep=0.5ex, label=\roman*)]
\item Then
\begin{equation*}
    \innerprod{\mathcal{T} u}{f}_\nu
        = \big(D_\nu^{-1} \ts S^\top D_\mu \ts \boldsymbol{u}\big)^\top D_\nu \ts \boldsymbol{f}
        = \boldsymbol{u}^\top D_\mu \ts S \boldsymbol{f}
        = \innerprod{u}{\mathcal{K} f}_\mu.
\end{equation*}
\item Additionally, we have
\begin{equation*}
    \innerprod{\mathcal{F} u}{f}_\mu
        = \big(S D_\nu^{-1} \ts S^\top D_\mu \ts \boldsymbol{u}\big)^\top D_\mu \ts \boldsymbol{f}
        = \boldsymbol{u}^\top D_\mu \big(S \ts D_\nu^{-1} \ts S^\top D_\mu \ts \boldsymbol{f}\big)
        = \innerprod{u}{\mathcal{F} f}_\mu.
\end{equation*}
Furthermore, $ \innerprod{\mathcal{F} u}{u}_\mu = \big\| D_\nu^{-\nicefrac{1}{2}} S^\top D_\mu u \ts \big\|^2 \ge 0 $.
\item Analogous to ii).
\item First, $ \mathcal{T} \mathds{1} = D_\nu^{-1} \ts S^\top D_\mu \ts \mathds{1} = D_\nu^{-1} \ts S^\top \boldsymbol{\mu} = D_\nu^{-1} \ts \boldsymbol{\nu} = \mathds{1} $. Also, $ \mathcal{K} \mathds{1} = S \ts \mathds{1} = \mathds{1} $ since $ S $ is row-stochastic. Thus, this also holds for compositions of these operators.
\item Using ii) and iii), the eigenvalues are real-valued and non-negative. Furthermore, since $ F $ and $ B $ are stochastic matrices as shown in iv), the spectral radius is~$ 1 $. \qedhere
\end{enumerate}
\end{proof}

\begin{corollary} \label{cor:doubly stochastic}
If $ \mu $ is the uniform distribution, then $ F $ is doubly stochastic. Analogously, if $ \nu $ is the uniform distribution, then $ B $ is doubly stochastic. Since $ \mu = S^{\ts-\!\top} \nu $, it follows that $ B $ is doubly stochastic if $ X \sim \frac{1}{n} S^{\ts-\!\top} \mathds{1} $.
\end{corollary}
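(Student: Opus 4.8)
The plan is to reduce everything to the two facts already at hand. By Proposition~\ref{pro:operator properties}(iv) the matrices $F$ and $B$ are row-stochastic, i.e.\ $F\ts\mathds{1}=\mathds{1}$ and $B\ts\mathds{1}=\mathds{1}$, so either of them is doubly stochastic as soon as it is \emph{symmetric}: in that case $\mathds{1}^\top F=(F^\top\mathds{1})^\top=(F\ts\mathds{1})^\top=\mathds{1}^\top$, and likewise for $B$. Hence it suffices to exhibit symmetry of $F$ (resp.\ $B$) under the stated hypotheses, using the matrix representations $F=S\ts D_\nu^{-1}S^\top D_\mu$ and $B=D_\nu^{-1}S^\top D_\mu\ts S$ together with the defining identity $\boldsymbol{\nu}=S^\top\boldsymbol{\mu}$.

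First I would treat the case $\mu$ uniform. Writing $\boldsymbol{\mu}=\frac{1}{n}\mathds{1}$ gives $D_\mu=\frac{1}{n}I$, so
\[
    F = S\ts D_\nu^{-1}S^\top D_\mu = \tfrac{1}{n}\ts S\ts D_\nu^{-1}S^\top ,
\]
which is manifestly symmetric because $D_\nu^{-1}$ is diagonal; combined with row-stochasticity, $F$ is doubly stochastic. Analogously, if $\nu$ is uniform then $D_\nu^{-1}=n\ts I$ and
\[
    B = D_\nu^{-1}S^\top D_\mu\ts S = n\ts S^\top D_\mu\ts S ,
\]
which is again symmetric since $D_\mu$ is diagonal, so $B$ is doubly stochastic. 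If one prefers to avoid the symmetry shortcut, the same conclusions follow by a one-line computation using $\boldsymbol{\nu}=S^\top\boldsymbol{\mu}$ and the row-stochasticity of $S$: for instance, when $\mu$ is uniform, $F^\top\mathds{1}=D_\mu S\ts D_\nu^{-1}S^\top\mathds{1}=\tfrac{1}{n}S\ts D_\nu^{-1}(n\ts\boldsymbol{\nu})=S\ts\mathds{1}=\mathds{1}$.

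For the last assertion I would simply read the relation $\boldsymbol{\nu}=S^\top\boldsymbol{\mu}$ backwards to get $\boldsymbol{\mu}=S^{\ts-\!\top}\boldsymbol{\nu}$, so that $\nu$ is the uniform distribution if and only if $\boldsymbol{\mu}=\frac{1}{n}\ts S^{\ts-\!\top}\mathds{1}$, i.e.\ $X\sim\frac{1}{n}\ts S^{\ts-\!\top}\mathds{1}$; the claim about $B$ then follows directly from the second part. There is no genuine obstacle here: the argument is essentially a bookkeeping exercise with diagonal matrices. The closest thing to a subtlety is keeping track of which weighted inner product each transpose/adjoint refers to, and noting that the expression $S^{\ts-\!\top}$ presupposes invertibility of $S$ (which is implicit in the statement's notation and is exactly what lets us solve $\boldsymbol{\nu}=S^\top\boldsymbol{\mu}$ for $\boldsymbol{\mu}$).
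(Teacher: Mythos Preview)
Your argument is correct and follows exactly the paper's approach: invoke row-stochasticity from Proposition~\ref{pro:operator properties}(iv), then observe that $F$ (resp.\ $B$) becomes symmetric when $D_\mu$ (resp.\ $D_\nu^{-1}$) is a scalar multiple of the identity. Your write-up is in fact more explicit than the paper's two-line proof, and the added direct column-sum computation and the remark on inverting $S^\top$ are fine elaborations.
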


\begin{proof}
By Proposition~\ref{pro:operator properties}, $ F $ and $ B $ are row-stochastic. If $ \mu $ is the uniform distribution, then $ F $ is symmetric. Similarly, $ B $ is symmetric if $ \nu $ is the uniform distribution.
\end{proof}

All the transfer operators introduced above can also be represented as compositions of covariance and cross-covariance operators.

\begin{proposition} \label{pro:covariance operator representations}
Let $ \mathcal{C}_{yx} = \mathcal{C}_{xy}^* $. Then it holds that:
\begin{enumerate}[leftmargin=4ex, itemsep=0ex, topsep=0.5ex, label=\roman*)]
\item $ \mathcal{K} = \mathcal{C}_{xx}^{-1} \mathcal{C}_{xy} $,
\item $ \mathcal{P} = \mathcal{C}_{xx}^{-1} \mathcal{C}_{yx} $, provided that $ \mu $ is the uniform distribution,
\item $ \mathcal{T} = \mathcal{C}_{yy}^{-1} \mathcal{C}_{yx} $,
\item $ \mathcal{F} = \mathcal{C}_{xx}^{-1} \mathcal{C}_{xy} \ts \mathcal{C}_{yy}^{-1} \mathcal{C}_{yx} $,
\item $ \mathcal{B} = \mathcal{C}_{yy}^{-1} \mathcal{C}_{yx} \ts \mathcal{C}_{xx}^{-1} \mathcal{C}_{xy} $.
\end{enumerate}
\end{proposition}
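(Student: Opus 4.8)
The plan is to reduce all five identities to the matrix representations collected in the subsection on matrix representations of operators and to verify them by direct computation. Recall that $C_{xx} = D_\mu$, $C_{yy} = D_\nu$, and $C_{xy} = D_\mu \ts S$; taking the adjoint with respect to the standard inner product gives $C_{yx} = C_{xy}^\top = S^\top D_\mu$. The diagonal matrices $D_\mu$ and $D_\nu$ are invertible because $\mu$ and $\nu$ were assumed strictly positive, so every inverse appearing in the statement is well-defined.

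First I would dispatch i) and iii), which are immediate: $C_{xx}^{-1} C_{xy} = D_\mu^{-1} D_\mu \ts S = S = K$ and $C_{yy}^{-1} C_{yx} = D_\nu^{-1} S^\top D_\mu = T$, matching the matrix representations of $\mathcal{K}$ and $\mathcal{T}$. For ii) the point to highlight is that the uniform-density hypothesis makes $D_\mu = \frac{1}{n} I$, so that $C_{xx}^{-1} C_{yx} = D_\mu^{-1} S^\top D_\mu = S^\top = P$; without this hypothesis one only obtains a similarity transform of $S^\top$, not $\mathcal{P}$ itself, which is why ii) is stated conditionally.

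Then iv) and v) follow by composing the pieces already computed: since $C_{xx}^{-1} C_{xy} = K$ and $C_{yy}^{-1} C_{yx} = T$, we obtain $C_{xx}^{-1} C_{xy} \ts C_{yy}^{-1} C_{yx} = K \ts T = F$ and $C_{yy}^{-1} C_{yx} \ts C_{xx}^{-1} C_{xy} = T \ts K = B$, using $\mathcal{F} = \mathcal{K}\mathcal{T}$ and $\mathcal{B} = \mathcal{T}\mathcal{K}$ from the definition of the forward--backward and backward--forward operators. Alternatively, one can simply expand the right-hand sides directly, e.g.\ $D_\mu^{-1}(D_\mu \ts S)(D_\nu^{-1})(S^\top D_\mu) = S \ts D_\nu^{-1} S^\top D_\mu = F$, and analogously for $B$.

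There is no serious obstacle here; the proof is essentially bookkeeping. The only points that require care are being explicit that $\mathcal{C}_{xy}^*$ denotes the adjoint with respect to the \emph{standard} inner product (so that its matrix is the plain transpose $S^\top D_\mu$ rather than some weighted transpose), and flagging the uniform-density hypothesis in ii), since that is the single place the claimed identity is not unconditional.
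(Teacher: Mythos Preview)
Your proof is correct and mirrors the paper's own argument: the paper explicitly offers the matrix-representation verification (``Alternatively, $C_{xx}^{-1}C_{xy}=S=K$'', etc.) alongside short operator-theoretic derivations, and your computations for i)--v) coincide with that matrix route essentially line for line. The only minor addition in the paper is that for i)--iii) it also sketches the inner-product argument (e.g.\ $\innerprod{f}{\mathcal{C}_{xy}g}=\innerprod{f}{\mathcal{K}g}_\mu=\innerprod{f}{\mathcal{C}_{xx}\mathcal{K}g}$), but this is presented as an alternative, not a necessary ingredient.
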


\begin{proof}
We can either use properties of the operators or their matrix representations:
\begin{enumerate}[leftmargin=3.5ex, itemsep=0ex, topsep=0.5ex, label=\roman*)]
\item This follows from $ \innerprod{f}{\mathcal{C}_{xy} \ts g} = \innerprod{f}{\mathcal{K} g}_\mu = \innerprod{f}{\mathcal{C}_{xx} \ts \mathcal{K} g} $ or $ C_{xx}^{-1} C_{xy} = S = K $.
\item Using the duality between $ \mathcal{P} $ and $ \mathcal{K} $, we write
\begin{equation*}
    \innerprod{\mathcal{P} \rho}{f}
        = \innerprod{\rho}{\mathcal{C}_{xx}^{-1} \mathcal{C}_{xy} f}
        = \innerprod{\mathcal{C}_{yx} \ts \mathcal{C}_{xx}^{-1} \rho}{f}
        = \innerprod{\mathcal{C}_{xx}^{-1} \mathcal{C}_{yx} \rho}{f},
\end{equation*}
where $ \mathcal{C}_{xx}^{-1} $ and $ \mathcal{C}_{yx} $ commute if $ \mu $ is the uniform distribution. Alternatively, this can be seen as follows: $ C_{xx}^{-1} C_{yx} = D_\mu^{-1} S^\top D_\mu = n \ts I \ts S^\top \frac{1}{n} I = S^\top = P $.
\item By Proposition~\ref{pro:operator properties}, we have
\begin{equation*}
    \innerprod{\mathcal{T} u}{f}_\nu
        = \innerprod{u}{\mathcal{K} f}_\mu
        = \innerprod{u}{\mathcal{C}_{xx}^{-1} \mathcal{C}_{xy} f}_\mu
        = \innerprod{u}{\mathcal{C}_{xy} f}
        = \innerprod{\mathcal{C}_{yx} u}{f}
        = \innerprod{\mathcal{C}_{yy}^{-1} \mathcal{C}_{yx} u}{f}_\nu.
\end{equation*}
Similarly, $ C_{yy}^{-1} C_{yx} = D_\nu^{-1} S^\top D_\mu = T $.
\item and v) This follows immediately from the definitions of $ \mathcal{F} $ and $ \mathcal{B} $. \qedhere
\end{enumerate}
\end{proof}

\end{document}